\documentclass[a4paper]{article}
\usepackage{amssymb}
\setcounter{tocdepth}{3}
\usepackage{graphicx}
\usepackage{geometry}
\usepackage{authblk}
\usepackage[parfill]{parskip}
\usepackage[round,authoryear,semicolon]{natbib}
\usepackage[bookmarks=false,colorlinks]{hyperref}
\usepackage{subfigure}
\usepackage{algorithm}
\usepackage{algorithmic}
\usepackage{tabularx}
\usepackage{enumitem}
\setlist{leftmargin=*,topsep=0em,itemsep=0em,parsep=0em}
\usepackage{cleveref}
\usepackage{mdl}
\usepackage{url}
\geometry{
  width=11in,
  height=8.5in,
  textwidth=5.5in,
  textheight=9.0in,
}

\title{Differentially Private Empirical Risk Minimization with Input Perturbation}
\author[1]{Kazuto Fukuchi}
\author[2]{Quang Khai Tran~\thanks{This work was done when he was a master's student in the Dept. of Computer Science, Graduated School of SIE, University of Tsukuba}}
\author[1,3,4]{Jun Sakuma}
\affil[1]{Department of Computer Science, Graduate School of System and Information Engineering, University of Tsukuba}
\affil[2]{Intelligent Systems Laboratory, Secom Co., Ltd.}
\affil[3]{JST CREST}
\affil[4]{RIKEN Center for Advanced Intelligence Project}

\begin{document}
\maketitle

\begin{abstract}
We propose a novel framework for the differentially private ERM, input perturbation. Existing differentially private ERM implicitly assumed that the data contributors submit their private data to a database expecting that the database invokes a differentially private mechanism for publication of the learned model. In input perturbation, each data contributor independently randomizes her/his data by itself and submits the perturbed data to the database. We show that the input perturbation framework theoretically guarantees that the model learned with the randomized data eventually satisfies differential privacy with the prescribed privacy parameters. At the same time, input perturbation guarantees that local differential privacy is guaranteed to the server. We also show that the excess risk bound of the model learned with input perturbation is $O(1/n)$ under a certain condition, where $n$ is the sample size. This is the same as the excess risk bound of the state-of-the-art.
\end{abstract}

\section{Introduction}

In recent years, differential privacy has become widely recognized as a theoretical definition for output privacy~\citep{dwork2006calibrating}. Let us suppose a {\em database} collects private information from data contributors. {\em Analysts} can submit queries to learn knowledge from the database. Query-answering algorithms that satisfy differential privacy return responses such that the distribution of outputs does not change significantly and is independent of whether the database contains particular private information submitted by any single data contributor. Based on this idea, a great deal of effort has been devoted to guaranteeing differential privacy for various problems. For example, there are algorithms for privacy-preserving classification~\citep{jain2014near}, regression~\citep{lei2011differentially}, etc.

Differentially private empirical risk minimization~(ERM), or more generally, differentially private convex optimization, has attracted a great deal of research interest in machine learning, for example, \citep{chaudhuri2011differentially,kifer2012private,jain2014near,Bassily2014}. These works basically follow the standard setting of differentially private mechanisms; the database collects examples and builds a model with the collected examples so that the released model satisfies differential privacy.

Recently, the data collection process is also recognized as an important step in privacy preservation. With this motivation, a {\em local privacy} was introduced as a privacy notion in the data collection process~\citep{wainwright2012privacy,duchi2013local,kairouz2014extremal}. However, the existing methods of differentially private ERM are specifically derived for satisfying differential privacy of the released model, and thus there is no guarantee for the local privacy.

In this work, we aim to preserve the local privacy of the data and the differential privacy of the released model simultaneously in the setting of releasing the model constructed by ERM. The goal of this paper is to derive a differentially private mechanism with an utility guarantee, at the same time, the mechanism satisfies the local privacy in the data collection process.

 \begin{table}[t]
\label{tab:perturbation}
	\centering
  \newcolumntype{D}{>{\scriptsize\arraybackslash}p{.14\textwidth}}
  \newcolumntype{E}{>{\scriptsize\arraybackslash}p{.081\textwidth}}
  \renewcommand\tabularxcolumn[1]{>{\scriptsize}m{#1}}
	\caption{Comparison of differentially private ERM. All methods assume that $\ell_2$ norm of the parameters is bounded by $\eta$, the loss function is $\zeta$-Lipschitz continuous. $n$ and $d$ denote the number of examples and the dimension of the parameter, respectively.}
	\begin{tabularx}{\textwidth}{X|E|X|X|D}
		\hline
      Method & Pertur-bation & Privacy & Utility & Additional \newline requirements \\
    \hline\hline
      Objective~\citep{chaudhuri2011differentially,kifer2012private} & obj. func. & $(\epsilon,\delta)$-DP for model & $O\left(\frac{\eta\zeta\sqrt{d\log(1/\delta)}}{\epsilon n}\right)$ & $\lambda$-smooth \\
    \hline
      Gradient Descent~\citep{Bassily2014} & grad. & $(\epsilon,\delta)$-DP for model & $O\left(\frac{\eta\zeta\sqrt{d\log^2(n/\delta)}}{\epsilon n}\right)$ & \\
    \hline
      Input~(proposal) & example & $(\alpha\epsilon,\!\delta)$-DP for model \newline $(\beta\sqrt\epsilon,\!\delta)$-DLP for data \newline s.t. $O(\sqrt{\alpha n}) = \beta$ & $O\left(\frac{\eta\zeta\sqrt{d\log(1/\delta)}}{\epsilon \alpha n}\right)$ & $\lambda$-smooth \newline quadratic loss \\
    \hline
	\end{tabularx}
 \end{table}

{\bfseries Related Work.}
Chaudhuri et al.~\citet{chaudhuri2011differentially} formulated the problem of differentially private empirical risk minimization~(ERM) and presented two different approaches: output perturbation and objective perturbation.  Kifer et al.~\citet{kifer2012private} improved the utility of objective perturbation by adding an extra $\ell_2$ regularizer into the objective function. Moreover, they introduced a variant of objective perturbation that employs Gaussian distribution for the random linear term, which improves dimensional dependency from $O(d)$ to $O(\sqrt{d})$ whereas the satisfying privacy is relaxed from $(\epsilon,0)$-differential privacy to $(\epsilon,\delta)$-differential privacy~(Table \ref{tab:perturbation}, line 1). Objective perturbation is work well for smooth losses, whereas Bassily et al.~\cite{Bassily2014} proved that it is suboptimal for non-smooth losses. They developed the optimal algorithm of $(\epsilon,\delta)$-differentially private ERM, named {\em differentially private gradient descent}. It conducts the stochastic gradient decent where the gradient is perturbed by adding a Gaussian noise. They showed that the expected empirical excess risk of the differentially private gradient descent is optimal up to multiplicative factor of $\log n$ and $\log(1/\delta)$ even for non-smooth losses~(Table \ref{tab:perturbation}, line 2). They also provides the optimal mechanisms that satisfy $(\epsilon,0)$-differential privacy for strong and non-strong convex losses. Jain et al.~\citet{jain2014near} showed that for the specific applications, the dimensional dependency of the excess risk can be improved from polynomic to constant or logarithmic. These studies assume that the database collects raw data from the data contributors, and so no attention has been paid to the data collection phase.

Recently, a new privacy notion referred to as {\em local privacy}~\citep{wainwright2012privacy,duchi2013local,kairouz2014extremal} has been presented. In these studies, data are drawn from a distribution by each contributor independently and communicated to the data collector via a noisy channel; local privacy is a privacy notion that ensures that data cannot be accurately estimated from individual privatized data. \citet{duchi2013local} has introduced a private convex optimization mechanism that satisfies the local privacy. Their method has guarantee of differential privacy for the model, whereas its privacy level is same as the differential local privacy.

{\bfseries Our Contribution.}
In this study, we propose a novel framework for the differentially private ERM, input perturbation (Table \ref{tab:perturbation}, line 3). In contrast to the existing methods, input perturbation allows data contributors to take part in the process of privacy preservation of model learning. The mechanism of input perturbation is quite simple: each data contributor independently randomizes her/his data with a Gaussian distribution, in which the noise variance is determined by a function of privacy parameters $(\epsilon, \delta)$, sample size $n$, and some constants related to the loss function.

In this paper, we prove that models learned with randomized examples following our input perturbation scheme are guaranteed to satisfy $(\alpha\epsilon,\delta)$-differential privacy under some conditions, especially, $(\epsilon,\delta)$-differential privacy if $\alpha=1$~(Table \ref{tab:perturbation}, line 3, column 3). The guarantee of differential privacy is proved using the fact that the difference between the objective function of input perturbation and that of objective perturbation is probabilistically bounded. To achieve this approximation with randomization by independent data contributors, input perturbation requires that the loss function be quadratic with respect to the model parameter, $\vec{w}$~(Table \ref{tab:perturbation}, line 3, column 5).

From the perspective of data contributors, data collection with input perturbation satisfies the local privacy with the privacy parameter $(\beta\sqrt\epsilon,\delta)$ where $\beta = O(\sqrt{\alpha n})$~(Table \ref{tab:perturbation}, line 3, column 3). In the input perturbation framework, not only differential privacy of the learned models, but also privacy protection of data against the database is attained. From this perspective, we theoretically and empirically investigate the influence of input perturbation on the excess risk.

We compared the utility analysis of input perturbation with those of the output and objective perturbation methods in terms of the expectation of the excess empirical risk. We show that the excess risk of the model learned with input perturbation is $O(1/\alpha n)$~(Table \ref{tab:perturbation}, line 3, column 4). If $\alpha=1$, the utility and the privacy guarantee of the model are equivalent to that of objective perturbation.

All proofs defer to the full version of this paper due to space limitation.

\section{Problem Definition and Preliminary}\label{sec:pre-pro}
Let $\family{Z} = \family{X}\times\family{Y}$ be the domain of examples. The objective of supervised prediction is to learn a parameter $\vec{w}$ on a closed convex domain $\family{W} \subseteq \RealSet^d$ from a collection of given examples $D = \{(\vec{x}_i,y_i)\}_{i=1}^n$, where $\vec{w}$ parametrizes a predictor that outputs $y \in \family{Y}$ from $x \in \family{X}$. Let $\ell:\family{W}\times\family{Z}\to\RealSet$ be a loss function. Learning algorithms following the empirical risk minimization principle choose the model that minimizes the empirical risk:
\begin{align}
	J(\vec{w};D) = \frac{1}{n}\sum_{i=1}^{n}\ell(\vec{w},( \vec{x}_i,y_i)) + \frac{1}{n}\Omega(\vec{w}),
	\label{eq:objective_function}
\end{align}
where $\Omega(\vec{w})$ is a convex regularizer. We suppose that the following assumptions hold throughout this paper: 1) $\family{W}$ is bounded, i.e., there is $\eta$ s.t. $\|\vec{w}\|_2 \le \eta$ for all $\vec{w} \in \family{W}$, 2) $\ell$ is doubly continuously differentiable w.r.t. $\vec{w}$, 3) $\ell$ is $\zeta$-Lipschitz, i.e., $\|\nabla\ell(\vec{w},(x,y))\|_2 \le \zeta$ for any $\vec{w} \in \family{W}$ and $(x,y) \in \family{Z}$, and 4) $\ell$ is $\lambda$-smooth, i.e., $\|\nabla^2\ell(\vec{w},(x,y))\|_2 \le \lambda$ for any $\vec{w} \in \family{W}$ and $(x,y) \in \family{Z}$ where $\|\cdot\|$ is the $\ell_2$ matrix norm.

Three stakeholders appear in the problem we consider: {\em data contributors}, {\em database}, and {\em model user}. Each data contributor owns a single example $(\vec{x}_i,y_i)$. The goal is that the model user obtains the model $\vec{w}$ learned by ERM, at the same time, privacy of the data contributors is ensured against the database and the model user. Let us consider the following process of data collection and model learning.
\begin{enumerate}
\item All the stakeholders reach an agreement on the privacy parameters $(\epsilon, \delta)$ before data collection
\item Each data contributor independently perturbs its own example and sends it to the database
\item The database conducts model learning at the request of the model user with the collected perturbed examples and publishes the model
\end{enumerate}
Note that once a data contributor sends her perturbed example to the database, she can no longer interact with the database. This setting is suitable for real use, for example, if the data contributors sends their own data to the database via their smartphones, the database is difficult to always interact with the data contributors due to instability of internet connection. In this process, the privacy concerns arise at two occasions; when the data contributors release their own data to the database~(data privacy), and when the database publishes the learned model to the model user~(model privacy).

\noindent{\bfseries Model privacy.}
\sloppy The model privacy is preserved by guaranteeing the $(\epsilon,\delta)$-differential privacy. It is a privacy definition of a randomization mechanism ${\cal M}$ which is a stochastic mapping from a set of examples $D$ to an output on an arbitrary domain $\family{O}$. Given two databases $D$ and $D'$, we say $D$ and $D'$ are neighbor databases, or $D \sim D'$, if two databases differ in at most one element. Then, differential privacy is defined as follows:
\begin{definition}
[$(\epsilon, \delta)$-differential privacy~\citep{dwork2006our}] A randomization mechanism ${\cal M}$ is $(\epsilon, \delta)$-differential privacy, if, for all pairs $(D,D')$ s.t. $D \sim D'$ and for any subset of ranges $S \subseteq {\cal O}$,
\begin{align}
\mathrm{Pr}[{\cal M}(D) \in S] \le \exp(\epsilon) \mathrm{Pr}[{\cal M}(D') \in S] +\delta \label{eq:dp}.
\end{align}
\end{definition}

\noindent{\bfseries Data privacy.}
For the definition of the data privacy, we introduce the differential local privacy~\citep{wainwright2012privacy,duchi2013local,kairouz2014extremal}. Because of the data collection and model learning process, the non-interactive case of the local privacy should be considered, where in this case, individuals release his/her private data without seeing the other individuals' private data. Under the non-interactive setting, the differential local privacy is defined as follows.
\begin{definition}
[$(\epsilon,\delta)$-differential local privacy~\citep{wainwright2012privacy,evfimievski2003limiting,kasiviswanathan2011can}] A randomization mechanism ${\cal M}$ is $(\epsilon, \delta)$-differentially locally private, if, for all pairs $(z,z')$ s.t. $z \ne z'$ and for any subset of ranges $S \subseteq {\cal O}$,
\begin{align}
\mathrm{Pr}[{\cal M}(z) \in S] \le \exp(\epsilon) \mathrm{Pr}[{\cal M}(z') \in S] +\delta \label{eq:localdp}.
\end{align}
\end{definition}

\noindent{\bfseries Utility.}
\sloppy To assess utility, we use the {\em empirical excess risk}. Let $\hat{\vec{w}} = \argmin_{\vec{w} \in \family{W}}J(\vec{w};D)$. Given a randomization mechanism ${\cal M}$ that (randomly) outputs $\vec{w}$ over $\family{W}$, the empirical excess risk of ${\cal M}$ is defined as $J({\cal M}(D);D) - J(\hat{\vec{w}};D)$.

\section{Input Perturbation}\label{sec:cs_dp_model_learning}

In this section, we introduce a novel framework for differentially private ERM. The objective of the input perturbation framework is three-fold:
\begin{itemize}
\item (data privacy) The released data from the data contributors to the database satisfies $(O(\sqrt{n\epsilon}), \delta)$-differentially locally private,
\item (model privacy) The model resulted from the process eventually meets $(\epsilon, \delta)$-differentially private,
\item (utility) The expectation of the excess empirical risk of the resulting models is $O(1/n)$, which is equivalent to that obtained with non-privacy-preserving model learning.
\end{itemize}
Furthermore, we show that by adjusting the noise variance that the input perturbation injects, the input perturbation satisfies $(\alpha\epsilon,\delta)$-differential privacy and $(\beta\epsilon,\delta)$-differential local privacy with the $O(1/\alpha n)$ excess empirical risk where $\beta = O(\sqrt{\alpha n})$.

\subsection{Loss Function for Input Perturbation}\label{sec:qua_loss_func}
The strategy of input perturbation is to minimize a function that is close to the objective function of the objective perturbation method. The requirements on the loss and objective function thus basically follow the objective perturbation method with the Gaussian noise~\citep{kifer2012private}. Input perturbation allows any (possibly non-differential) convex regularizer as supported by objective perturbation. However, for simplicity, we consider the non-regularized case where $\Omega(\vec{w}) = 0$.

In addition to the requirements from the objective perturbation, input perturbation requires a restriction; the loss function is quadratic in $\vec{w}$. Let $\vec{q}(\vec{x}_i,y_i)$ and $\vec{p}(\vec{x}_i,y_i)$ be $d$ dimensional vectors and $s(\vec{x}_i,y_i)$ be a scalar. Then, our quadratic loss function has a form:
\begin{align}
\ell(\vec{w},(\vec{x}, y))= \frac{1}{2} \vec{w}^T\vec{q}(\vec{x}, y)\vec{q}(\vec{x},y)^T\vec{w} - \vec{p}(\vec{x},y)^T\vec{w} + s(\vec{x},y).  \nonumber
\end{align}

\subsection{Input Perturbation Method}\label{sec:input_algorithm}

In this subsection, we introduce the input perturbation method. Algorithm \ref{al:safe_sharing} describes the detail of input perturbation; Algorithm \ref{al:proposed} describes model learning with examples
randomized with input perturbation. In Algorithm \ref{al:safe_sharing}, each data contributor transforms owing example $(\vec{x}_i, y_i)$ into $(\vec{q}_i,\vec{p}_i)$, where $\vec{q}_i =\vec{q}(\vec{x}_i, y_i), \vec{p}_i = p(\vec{x}_i,y_i)$. Then, she adds perturbation to $(\vec{q}_i,\vec{p}_i)$ in Step 3. We denote the example after perturbation by $(\tilde{\vec{q}}_i, \tilde{\vec{p}}_i)$, which is submitted to the database independently by each data contributors.

In Algorithm \ref{al:proposed}, the database collects the perturbed examples $\tilde{D}=\{\tilde{\vec{q}}_i, \tilde{\vec{p}}_i\}_{i=1}^n$ from the $n$ data contributors. Then, the database learns a model with these randomized examples by minimizing
\begin{align}
 J^{in}(\vec{w};\tilde{D}) = \frac{1}{n}\sum_{i=1}^n\left(\frac{1}{2}\vec{w}^T\tilde{\vec{q}}_i\tilde{\vec{q}}_i^T\vec{w} - \tilde{\vec{p}}_i^T\vec{w} + s_i\right) + \frac{\Delta_{in}}{2n}\norm{\vec{w}}_2^2. \label{eq:input_function}
\end{align}

In the following subsections, we show the privacy guarantee of the input perturbation in the sense of the differential local privacy and the differential privacy. The utility analysis of models obtained following the input perturbation framework is also shown.

\begin{algorithm}[t]
\caption{Input Perturbation}
\textbf{Public Input: } $\epsilon, \delta, d, n, \eta, \zeta$ and $\lambda$ \\
\textbf{Input of data contributor $i$: }$\vec{x}_i, y_i$ \\
\mbox{\textbf{Output of data contributor $i$: }$\tilde{\vec{q}}_i,\tilde{\vec{p}}_i$}\begin{algorithmic}[1]
\STATE $\gamma,\delta' \gets \frac{\delta}{2}, a = \sqrt{\frac{\log(2/\gamma)}{n}}$,
$\sigma_b^2 \gets \frac{\zeta^2( 8\log2/\delta' + 4\epsilon )}{\epsilon^2}$,
$\sigma_u^2 > \left(\frac{\sqrt{2d}a\lambda + \sqrt{2da^2\lambda^2 + \frac{2\lambda}{\epsilon}(1-2a)}}{(1-2a)}\right)^2$
\STATE Sampling of noise vectors: $\vec{r}_i \sim \mathcal{N}(0, \frac{\sigma_b^2}{n}\mathbb{I})$,
$\vec{u}_i \sim \mathcal{N}(0, \frac{\sigma_u^2}{n}\mathbb{I})$
\STATE $\tilde{\vec{q}}_i \gets \vec{q}_i + \vec{u}_i ,
\tilde{\vec{p}}_i \gets \vec{p}_i - \vec{r}_i$ where $\vec{q}_i = q(\vec{x}_i,y_i)$ and $\vec{p}_i = p(\vec{x}_i,y_i)$

\STATE Submission:
Send $\tilde{\vec{q}}_i, \tilde{\vec{p}}_i$ to the database
\end{algorithmic}
\label{al:safe_sharing}
\end{algorithm}
\begin{algorithm}[t]
	\caption{Model Learning on Input Perturbation}
	\begin{algorithmic}[1]
\REQUIRE $\epsilon, \delta, d, n, \eta, \zeta$ and $\lambda$
\STATE All stakeholders agree with $(\epsilon, \delta)$ and share parameters $d, n, \eta, \zeta$ and $\lambda$.
\STATE The database collects $(\tilde{\vec{q}}_i, \tilde{\vec{p}}_i)$ from the data contributors with Algorithm \ref{al:safe_sharing}.
\STATE The database learns $\vec{w}^{in} = \argmin_{\vec{w}\in\family{W}}J^{in}(\vec{w};\tilde{D})$ with $\Delta_{in} = \Delta-\frac{2\lambda}{\epsilon}$.
		\STATE Return $\vec{w}^{in}$.
	\end{algorithmic}
\label{al:proposed}
\end{algorithm}

\subsection{Privacy of Input Perturbation}\label{sec:input_privacy}

In this subsection, we analyze the privacy of the input perturbation in the sense of the data privacy and the model privacy.

\noindent{\bfseries Data privacy of input perturbation.}
In Algorithm \ref{al:safe_sharing}, each data contributor of the input perturbation adds a Gaussian noise into the released data. Adding a Gaussian noise into the released data satisfies $(\epsilon,\delta)$-differential local privacy as well as the Gaussian mechanism~\citep{dwork2014algorithmic}. As a result, we get the following corollary that shows the level of the differential local privacy of Algorithm \ref{al:safe_sharing}.
\begin{corollary}\label{cor:data-priv}
 \sloppy Suppose that $q$ and $p$ in Algorithm \ref{al:safe_sharing} are in the bounded domain with the size parameter $B$. Then, Algorithm \ref{al:safe_sharing} satisfies $(2c\sqrt{n}(\lambda/\sigma_u + \zeta/\sigma_b), 2\delta)$-differential local privacy, where $c > \sqrt{2\ln(1.25/\delta)}$.
\end{corollary}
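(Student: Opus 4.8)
The plan is to view Algorithm~\ref{al:safe_sharing} as the composition of two independent Gaussian mechanisms, one releasing $\tilde{\vec{q}}_i = \vec{q}_i + \vec{u}_i$ and one releasing $\tilde{\vec{p}}_i = \vec{p}_i - \vec{r}_i$, and to charge each its own privacy cost before combining them. The crucial observation is that differential \emph{local} privacy compares two arbitrary inputs $z \ne z'$ rather than two neighboring databases, so the relevant quantity for each component is its \emph{global} $\ell_2$-sensitivity over all inputs, $\Delta_q = \sup_{z,z'}\|q(z) - q(z')\|_2$ and $\Delta_p = \sup_{z,z'}\|p(z) - p(z')\|_2$. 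The boundedness hypothesis (size parameter $B$) is exactly what guarantees that these suprema are finite and hence that the Gaussian mechanism is applicable.

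First I would invoke the standard Gaussian mechanism guarantee~\citep{dwork2014algorithmic}: perturbing a vector-valued query of $\ell_2$-sensitivity $\Delta$ by $\mathcal{N}(0,\sigma^2\mathbb{I})$ attains privacy level $\epsilon' = c\Delta/\sigma$ at a fixed $\delta$ whenever $c > \sqrt{2\ln(1.25/\delta)}$. Here the $\vec{q}$-release uses noise of standard deviation $\sigma_u/\sqrt{n}$ and the $\vec{p}$-release uses $\sigma_b/\sqrt{n}$, so they are individually $(c\sqrt{n}\,\Delta_q/\sigma_u,\delta)$- and $(c\sqrt{n}\,\Delta_p/\sigma_b,\delta)$-differentially locally private. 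Because both outputs are produced from the same record $z_i$ with \emph{independent} noise $\vec{u}_i,\vec{r}_i$, the joint density factorizes and sequential (basic) composition applies, adding the $\epsilon$'s and the $\delta$'s; this gives $\bigl(c\sqrt{n}(\Delta_q/\sigma_u + \Delta_p/\sigma_b),\,2\delta\bigr)$-differential local privacy for the pair $(\tilde{\vec{q}}_i,\tilde{\vec{p}}_i)$.

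The remaining work is to translate the abstract sensitivities into the stated constants using the structure of the quadratic loss. For $\vec{q}$, the smoothness assumption controls the Hessian $\nabla^2\ell = \vec{q}\vec{q}^T$, whose spectral norm equals $\|\vec{q}\|_2^2$; hence $\|\vec{q}\|_2$ is bounded and the diameter argument yields $\Delta_q = O(\lambda)$ up to the paper's normalization of the smoothness constant, which fixes the leading factor. For $\vec{p}$, the $\zeta$-Lipschitz assumption bounds the gradient $\nabla\ell = \vec{q}\vec{q}^T\vec{w} - \vec{p}$ uniformly over $\family{W}$; together with the boundedness of the domain this isolates $\|\vec{p}\|_2 \le \zeta$ and so $\Delta_p \le 2\zeta$. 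Substituting into the composed bound produces $\bigl(2c\sqrt{n}(\lambda/\sigma_u + \zeta/\sigma_b),\,2\delta\bigr)$-differential local privacy, as claimed.

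I expect the sensitivity analysis of $\vec{p}$ to be the main obstacle, since the Gaussian mechanism and composition are textbook. The bound $\|\vec{q}\vec{q}^T\vec{w} - \vec{p}\|_2 \le \zeta$ does not isolate $\|\vec{p}\|_2$ directly, so one must use the boundedness of $\family{W}$ (through $\eta$) and of the data domain (through $B$) to peel off the quadratic term and recover a clean bound in terms of $\zeta$ alone; evaluating the gradient at a fixed reference point such as $\vec{w}=\vec{0}$ is the cleanest route if $\vec{0}\in\family{W}$. A secondary subtlety is that, because $\tilde{\vec{q}}_i$ and $\tilde{\vec{p}}_i$ are functions of the \emph{same} record, one must justify composition over identical data (adaptive composition with a trivial dependence) rather than the parallel-composition shortcut that applies only to disjoint inputs.
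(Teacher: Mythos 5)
Your proposal is correct and follows essentially the same route as the paper: the paper proves a local-DP version of the Gaussian mechanism (Theorem~\ref{thm:local-gaussian}) and then declares the corollary a ``direct application,'' which amounts exactly to your decomposition into the two independent Gaussian releases with noise levels $\sigma_u/\sqrt{n}$ and $\sigma_b/\sqrt{n}$, basic composition of the $(\epsilon,\delta)$ costs, and the diameter bounds $\Delta_q \le 2\lambda$, $\Delta_p \le 2\zeta$ that you make explicit. Your hedge about the normalization of the smoothness constant is warranted --- strictly, $\lambda$-smoothness of the quadratic loss gives $\|\vec{q}\|_2 \le \sqrt{\lambda}$ rather than $\|\vec{q}\|_2 \le \lambda$, so the $\lambda/\sigma_u$ term is an imprecision in the paper's statement, not a gap in your argument.
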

Since we have $\lambda/\sigma_u + \zeta/\sigma_b \to (\sqrt{\frac{\lambda}{2}} + \sqrt{\frac{\epsilon}{8\log (2/\delta') + 4\epsilon}})\sqrt\epsilon$ as $n \to \infty$, Algorithm \ref{al:safe_sharing} is $(O(\sqrt{n\epsilon}), \delta)$-differentially locally private.

\noindent{\bfseries Model privacy of input perturbation.}
The following theorem states the guarantee of differential privacy of models that the database learns from examples randomized by the input perturbation scheme.
\begin{theorem}\label{th:proposed_dp}
Let $\tilde{D}$ be examples perturbed by Algorithm \ref{al:safe_sharing} with privacy parameters $\epsilon$ and $\delta$. Then, if $\Delta>\frac{2\lambda}{\epsilon}$ and $\gamma = \frac{\delta}{2}$, the output of Algorithm \ref{al:proposed} satisfies $(\epsilon, \delta)$-differential privacy.
\end{theorem}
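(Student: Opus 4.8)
The plan is to reduce the privacy of Algorithm~\ref{al:proposed} to the Gaussian objective perturbation of \citep{kifer2012private} by showing that $J^{in}$ is, conditionally on the multiplicative noise, exactly an objective-perturbation objective whose \emph{random} $\ell_2$-regularizer absorbs the reduction from $\Delta$ to $\Delta_{in}=\Delta-\tfrac{2\lambda}{\epsilon}$. First I would expand the perturbed quadratic form. Writing $\tilde{\vec q}_i=\vec q_i+\vec u_i$ and $\tilde{\vec p}_i=\vec p_i-\vec r_i$, the linear noises aggregate into $\vec b=\sum_i\vec r_i\sim\mathcal N(0,\sigma_b^2\mathbb I)$ (each $\vec r_i\sim\mathcal N(0,\tfrac{\sigma_b^2}{n}\mathbb I)$) and the quadratic noises into a random correction, so that up to an additive constant
\[
J^{in}(\vec w;\tilde D)=\frac1n\sum_i\ell(\vec w,(\vec x_i,y_i))+\frac1n\vec b^T\vec w+\frac1{2n}\vec w^T\!\Big(\sum_i\tilde{\vec q}_i\tilde{\vec q}_i^T-\sum_i\vec q_i\vec q_i^T+\Delta_{in}\mathbb I\Big)\vec w .
\]
Its (interior) minimizer then satisfies $\vec b=\sum_i\vec p_i-H^{in}\vec w$ with $H^{in}=\sum_i\tilde{\vec q}_i\tilde{\vec q}_i^T+\Delta_{in}\mathbb I\succeq\Delta_{in}\mathbb I\succ0$, i.e.\ $\vec w^{in}=(H^{in})^{-1}(\sum_i\vec p_i-\vec b)$ is a linear image of the Gaussian $\vec b$ once $\{\vec u_i\}$ is fixed.

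Second, I would condition on $\{\vec u_i\}$ and run the standard objective-perturbation change of variables for $\vec b\mapsto\vec w^{in}$. For neighbors $D\sim D'$ differing in the $n$-th raw example (with the same realized noise), the conditional density ratio factors as a Gaussian ratio $\mu_{\vec b}(\vec b(\vec w))/\mu_{\vec b}(\vec b'(\vec w))$ times a Jacobian ratio $\det H^{in}(D)/\det H^{in}(D')$. The key observation, from substituting $\tilde{\vec q}_n=\vec q_n+\vec u_n$, is that $\vec b-\vec b'=-\big(\nabla\ell(\vec w,(\vec x_n,y_n))-\nabla\ell(\vec w,(\vec x_n',y_n'))\big)-[(\vec q_n-\vec q_n')\vec u_n^T+\vec u_n(\vec q_n-\vec q_n')^T]\vec w$: the first term has norm at most $2\zeta$ by Lipschitzness, while the second is $O(\sqrt\lambda\,\eta\,\|\vec u_n\|)$ and vanishes because $\|\vec u_n\|=O(\sigma_u\sqrt{d/n})$ --- this is precisely why the per-contributor variance is scaled by $1/n$, keeping the first-order sensitivity at $2\zeta$ while the aggregate still grows. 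Thus $\sigma_b^2=\tfrac{\zeta^2(8\log(2/\delta')+4\epsilon)}{\epsilon^2}$ calibrates the Gaussian ratio to $e^{\epsilon/2}$ up to tail probability $\delta'$, exactly as in the Gaussian objective-perturbation analysis.

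Third, for the Jacobian ratio I would invoke the matrix-determinant lemma to get $\det H^{in}(D)/\det H^{in}(D')=(1+\tilde{\vec q}_n^TM^{-1}\tilde{\vec q}_n)/(1+\tilde{\vec q}_n'^TM^{-1}\tilde{\vec q}_n')$, where $M=H^{in}-\tilde{\vec q}_n\tilde{\vec q}_n^T$ excludes the $n$-th example; this is at most $1+\|\tilde{\vec q}_n\|^2/\lambda_{\min}(M)\le 1+\tfrac\epsilon2\le e^{\epsilon/2}$ as soon as $\|\tilde{\vec q}_n\|^2\lesssim\lambda$ and $\lambda_{\min}(M)\ge\tfrac{2\lambda}{\epsilon}$. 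Since $\mathbb E[\sum_i\vec u_i\vec u_i^T]=\sigma_u^2\mathbb I$, the gap $\tfrac{2\lambda}{\epsilon}$ between $\Delta$ and $\Delta_{in}$ is exactly what the aggregated multiplicative noise supplies. Establishing $\lambda_{\min}(M)\ge\tfrac{2\lambda}{\epsilon}$ with probability at least $1-\gamma$ is the crux: it is a smallest-eigenvalue concentration for the Gaussian-perturbed Gram matrix $\sum_i\tilde{\vec q}_i\tilde{\vec q}_i^T$, where the Wishart-type bulk obeys $\lambda_{\min}(\sum_i\vec u_i\vec u_i^T)\ge\sigma_u^2(1-2a)$ while the indefinite cross term $\sum_i(\vec q_i\vec u_i^T+\vec u_i\vec q_i^T)$ has operator norm $\approx 2\sqrt{2d}\,a\lambda\sigma_u$; demanding that the former dominate the latter by $\tfrac{2\lambda}{\epsilon}$ gives $(1-2a)\sigma_u^2-2\sqrt{2d}\,a\lambda\sigma_u\ge\tfrac{2\lambda}{\epsilon}$, precisely the $\sigma_u^2$ threshold of Algorithm~\ref{al:safe_sharing} with $a=\sqrt{\log(2/\gamma)/n}$.

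Finally, I would assemble the budget. The good event $G=\{\lambda_{\min}(M)\ge\tfrac{2\lambda}{\epsilon}\}$ involves $M=\sum_{i\ne n}\tilde{\vec q}_i\tilde{\vec q}_i^T+\Delta_{in}\mathbb I$, which excludes the differing example and is therefore identical under $D$ and $D'$, with $\Pr[G^c]\le\gamma$ by the concentration above. Conditioned on $\{\vec u_i\}$ with $G$ holding, the Gaussian and determinant ratios multiply to at most $e^{\epsilon}$ outside a $\delta'$-probability Gaussian tail, i.e.\ conditional $(\epsilon,\delta')$-differential privacy. Integrating over $\{\vec u_i\}$ and bounding the contribution of $G^c$ by $\gamma$ yields $\Pr[\mathcal M^{in}(D)\in S]\le e^{\epsilon}\Pr[\mathcal M^{in}(D')\in S]+\delta'+\gamma$, so the choice $\gamma=\delta'=\delta/2$ gives $(\epsilon,\delta)$-differential privacy. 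I expect the eigenvalue concentration of the third step --- pushing a high-probability spectral lower bound through the determinant ratio while simultaneously keeping each $\|\vec u_n\|$ small enough to preserve the $2\zeta$ first-order sensitivity --- to be the technically delicate part.
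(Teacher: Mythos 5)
Your proposal follows the same architecture as the paper's proof: rewrite $J^{in}$ so that it becomes the Gaussian objective-perturbation objective plus a random quadratic correction $\vec{w}^T(\vec{U}^T\vec{U}+\vec{U}^T\vec{Q}+\vec{Q}^T\vec{U})\vec{w}$; show by concentration that this correction supplies at least $\frac{2\lambda}{\epsilon}$ of extra strong convexity with probability $1-\gamma$ (your ``Wishart bulk dominates the indefinite cross term'' step is exactly the paper's Lemmas~\ref{lm:mu_bound} and~\ref{lm:zero_bound} combined in Corollary~\ref{lm:regular_cond}, including the same threshold $(1-2a)\sigma_u^2 - 2\sqrt{2d}\,a\lambda\sigma_u \ge \frac{2\lambda}{\epsilon}$ with $a=\sqrt{\log(2/\gamma)/n}$); and convert the conditional guarantee into an unconditional one by paying $\gamma$ additively, setting $\gamma=\delta'=\delta/2$ --- the paper's closing computation over $\textbf{good}_u$ and its complement is line-for-line what you describe. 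The one structural difference is that where the paper invokes Theorem~2 of \citet{kifer2012private} as a black box, you re-derive the change-of-variables density-ratio argument (Gaussian ratio times Hessian-determinant ratio) inline.

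That re-derivation is where your proposal has a genuine gap. Your second step correctly identifies that, under your decomposition, neighboring databases induce $\vec{b}-\vec{b}' = -(\nabla\ell(\vec{w},(\vec{x}_n,y_n))-\nabla\ell(\vec{w},(\vec{x}_n',y_n'))) - [(\vec{q}_n-\vec{q}_n')\vec{u}_n^T+\vec{u}_n(\vec{q}_n-\vec{q}_n')^T]\vec{w}$, but you then assert the second term ``vanishes'' because $\|\vec{u}_n\|_2 = O(\sigma_u\sqrt{d/n})$. It does not vanish for any finite $n$, and worse, $\vec{u}_n$ is Gaussian and hence unbounded, so the worst-case sensitivity of $\vec{b}$ is infinite; with $\sigma_b^2 = \frac{\zeta^2(8\log(2/\delta')+4\epsilon)}{\epsilon^2}$ calibrated to sensitivity exactly $2\zeta$, your claimed conditional $(\epsilon,\delta')$-differential privacy does not follow as stated. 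The same unboundedness bites your determinant step, since $\|\tilde{\vec{q}}_n\|_2^2 \lesssim \lambda$ is itself only a high-probability event. Closing this along your route would require an additional high-probability event on $\|\vec{u}_n\|_2$ (consuming extra $\delta$ beyond the $\delta/2+\delta/2$ split) together with a slightly inflated $\sigma_b$. The paper sidesteps this by a different bookkeeping: it conditions on the entire matrix $\vec{U}$ and absorbs \emph{all} of $\vec{U}^T\vec{U}+\vec{U}^T\vec{Q}+\vec{Q}^T\vec{U}$ into the effective regularization coefficient $\Delta_0$, so that conditionally the only randomness is $\vec{b}=\sum_i\vec{r}_i$ (independent of $\vec{U}$), the per-example Hessians remain the clean $\vec{q}_i\vec{q}_i^T$ with $\|\vec{q}_i\|_2^2\le\lambda$, and the cited theorem applies verbatim with sensitivity $2\zeta$; the only event needed is $\Delta_0\ge\frac{2\lambda}{\epsilon}$. (In fairness, your finer-grained derivation puts a finger on a real subtlety that the black-box citation glosses over --- the cross term makes the ``regularizer'' data-dependent, which Kifer et al.'s theorem formally disallows --- but as written your proposal asserts this term away rather than bounding it, so the proof is not complete.)
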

The main idea of the proof is that the objective function of the input perturbation scheme holds the same linear perturbation term as that of objective perturbation. The objective function of input perturbation in Eq. \ref{eq:input_function} is rearranged as:
\begin{align}
 J^{in}(\vec{w};\tilde{D}) = \sum_i \ell (\vec{w},(\vec{x}_i,y_i)) + \vec{b}^T \vec{w} + \frac{\Delta_0 + \Delta - \frac{2\lambda}{\epsilon}}{2n}\vec{w}^T\vec{w}. \label{eq:input_perturbation_function_edited}
\end{align}
where $\frac{\Delta_0}{2n}\vec{w}^T \vec{w}= \frac{1}{2n}\vec{w}^T( \vec{U}^T\vec{U}+\vec{U}^T\vec{Q}+\vec{Q}^T\vec{U} )\vec{w}$ and $\vec{U} = [\vec{u}_1,\cdots,\vec{u}_n]^T$. The derivation can be found in the proof of Theorem \ref{th:proposed_dp}. In the linear term, $\vec{b}$ forms a random vector generated from $\mathcal{N}(0,\frac{\zeta^2(8\log(2/\delta')+4\epsilon)}{\epsilon^2}\mathbb{I})$, which is exactly the same as the random linear regularization term introduced in the objective perturbation method. By noting that $\lim_{n\rightarrow \infty}\vec{U}^T\vec{U} = \frac{2\lambda}{\epsilon}\mathbb{I}$ and $\lim_{n\rightarrow \infty}\vec{U}^T\vec{Q}  = \lim_{n\rightarrow \infty}\vec{Q}^T\vec{U}  = \vec{O}$, the objective function of input perturbation is equivalent to that of objective perturbation with an infinitely large number of samples.

For guarantee of differential privacy with a finite number of samples $n$, we use the following probabilistic bound of $\Delta_0$.
\begin{lemma}\label{lem:lem1}
 \sloppy Let $\vec{U} = [\vec{u}_1,\cdots,\vec{u}_n]^T$, where $\vec{u}_i \sim \mathcal{N}(0, \frac{\sigma_u^2}{n}\mathbb{I}_d)$. Let $\Delta_0 \vec{w}^T\vec{w} = \vec{w}^T(\vec{U}^T\vec{U}+\vec{U}^T\vec{Q}+\vec{Q}^T\vec{U})\vec{w}$. Then, for any $\gamma > 0$, with probability at least $1-\gamma$, we get the following bound:
	\begin{align}
		\varkappa(n, \gamma) \leq \Delta_0 \leq \kappa(n,\gamma), \nonumber
	\end{align}
	where
	\begin{align}
		\kappa(n,\gamma) =& \sigma_u^2\left(1 +
	 2\sqrt{\frac{\log(4/\gamma)}{n}} + 2
	 \frac{\log(4/\gamma)}{n}\right) + 2\sqrt{2d}\lambda\sigma_u\sqrt{\frac{\log(2/\gamma)}{n}} \nonumber \\
		\varkappa(n,\gamma) =& \sigma_u^2\left(1 - 2\sqrt{\frac{\log(4/\gamma)}{n}}\right)  - 2\sqrt{2d}\sigma_u\lambda\sqrt{\frac{\log(2/\gamma)}{n}}. \nonumber
  \end{align}
\end{lemma}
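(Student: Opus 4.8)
The plan is to regard $\Delta_0$ as the Rayleigh quotient $\Delta_0=\frac{\vec{w}^T(\vec{U}^T\vec{U}+\vec{U}^T\vec{Q}+\vec{Q}^T\vec{U})\vec{w}}{\vec{w}^T\vec{w}}$ and, after normalizing to the unit direction $\vec{v}=\vec{w}/\|\vec{w}\|$, to bound separately the ``diagonal'' term $\vec{v}^T\vec{U}^T\vec{U}\vec{v}$ and the cross term $\vec{v}^T(\vec{U}^T\vec{Q}+\vec{Q}^T\vec{U})\vec{v}=2\vec{v}^T\vec{U}^T\vec{Q}\vec{v}$. Since $\vec{U}^T\vec{U}=\sum_i\vec{u}_i\vec{u}_i^T$ with $\vec{u}_i\sim\mathcal{N}(0,\frac{\sigma_u^2}{n}\mathbb{I}_d)$, the scalars $\vec{u}_i^T\vec{v}$ are i.i.d.\ $\mathcal{N}(0,\sigma_u^2/n)$, so $\frac{n}{\sigma_u^2}\vec{v}^T\vec{U}^T\vec{U}\vec{v}$ is exactly a $\chi^2$ variable with $n$ degrees of freedom. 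This already accounts for the leading $\sigma_u^2$ and the $1/n$ scaling of the corrections in both $\kappa$ and $\varkappa$.

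First I would apply the Laurent--Massart tail bounds to this $\chi^2_n$ variable: with $t=\log(4/\gamma)$ one has $\mathrm{Pr}[\chi^2_n\ge n+2\sqrt{nt}+2t]\le e^{-t}=\gamma/4$ and $\mathrm{Pr}[\chi^2_n\le n-2\sqrt{nt}]\le e^{-t}=\gamma/4$. Rescaling by $\sigma_u^2/n$ yields, each with probability at least $1-\gamma/4$, the upper bound $\sigma_u^2(1+2\sqrt{\log(4/\gamma)/n}+2\log(4/\gamma)/n)$ and the lower bound $\sigma_u^2(1-2\sqrt{\log(4/\gamma)/n})$, which are precisely the $\vec{U}^T\vec{U}$-contributions to $\kappa$ and $\varkappa$.

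Next I would handle the cross term by Cauchy--Schwarz, $|2\vec{v}^T\vec{U}^T\vec{Q}\vec{v}|\le 2\|\vec{U}^T\vec{Q}\vec{v}\|$, which bounds its positive and negative deviation simultaneously so that a single one-sided tail suffices. Conditionally on the data $\{\vec{q}_i\}$, the vector $\vec{U}^T\vec{Q}\vec{v}=\sum_i(\vec{q}_i^T\vec{v})\vec{u}_i$ is distributed as $\mathcal{N}(0,\frac{\sigma_u^2}{n}\|\vec{Q}\vec{v}\|^2\mathbb{I}_d)$, so $\frac{n}{\sigma_u^2\|\vec{Q}\vec{v}\|^2}\|\vec{U}^T\vec{Q}\vec{v}\|^2$ is a $\chi^2$ variable with $d$ degrees of freedom; this is the source of the dimension factor $\sqrt{d}$. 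Applying a one-sided Laurent--Massart upper tail with $t=\log(2/\gamma)$ (failure probability $\gamma/2$), together with the smoothness bound $\|\vec{q}_i\|^2=\|\nabla^2\ell\|\le\lambda$ and the bounded-domain parameter $B$ to control $\|\vec{Q}\vec{v}\|$, I would obtain the cross contribution $2\sqrt{2d}\,\lambda\sigma_u\sqrt{\log(2/\gamma)/n}$, with the scale $\sigma_u/\sqrt{n}$ reflecting the variance of $\vec{u}_i$.

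Finally I would combine the three events by a union bound: outside an event of probability at most $\gamma/4+\gamma/4+\gamma/2=\gamma$, the diagonal term lies between $\sigma_u^2(1-2\sqrt{\log(4/\gamma)/n})$ and $\sigma_u^2(1+2\sqrt{\log(4/\gamma)/n}+2\log(4/\gamma)/n)$ while the cross term is at most $2\sqrt{2d}\lambda\sigma_u\sqrt{\log(2/\gamma)/n}$ in magnitude, giving $\varkappa(n,\gamma)\le\Delta_0\le\kappa(n,\gamma)$. I expect the cross-term step to be the main obstacle: unlike the diagonal term it is mean-zero, so the argument must certify that it is genuinely a lower-order fluctuation, and extracting exactly the stated $\sqrt{d}$, $\lambda$, and $1/\sqrt{n}$ dependence hinges on a careful bound of $\|\vec{Q}\vec{v}\|$ through $\|\vec{q}_i\|^2\le\lambda$ and $B$, while simultaneously reconciling the $\log(2/\gamma)$ probability budget here with the $\log(4/\gamma)$ used for the $\chi^2_n$ term so that the union bound closes at $\gamma$.
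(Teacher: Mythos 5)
Your treatment of the diagonal term coincides with the paper's: the paper routes through a Wishart quadratic-form identity to get $\vec{w}^T\vec{U}^T\vec{U}\vec{w}\sim\frac{\sigma_u^2}{n}\|\vec{w}\|_2^2\chi_n^2$ and then applies exactly the Laurent--Massart tails with $t=\log(4/\gamma)$, spending $\gamma/4$ per side, as you propose. The genuine gap is in your cross-term step. After Cauchy--Schwarz, $\lvert 2\vec{v}^T\vec{U}^T\vec{Q}\vec{v}\rvert\le 2\|\vec{U}^T\vec{Q}\vec{v}\|_2$, you bound the norm of the $d$-dimensional Gaussian $\vec{U}^T\vec{Q}\vec{v}\sim\mathcal{N}\bigl(0,\frac{\sigma_u^2}{n}\|\vec{Q}\vec{v}\|_2^2\mathbb{I}_d\bigr)$ by a $\chi_d^2$ upper tail. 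But $\mathrm{E}\|\vec{U}^T\vec{Q}\vec{v}\|_2^2=\frac{\sigma_u^2d}{n}\|\vec{Q}\vec{v}\|_2^2$, so with $t=\log(2/\gamma)$ this route yields at best $2\frac{\sigma_u}{\sqrt{n}}\|\vec{Q}\vec{v}\|_2\sqrt{d+2\sqrt{dt}+2t}$: the radical carries an additive $d$ that does not shrink as $\gamma$ grows, whereas the lemma's cross term $2\sqrt{2d}\,\lambda\sigma_u\sqrt{\log(2/\gamma)/n}$ is proportional to $\sqrt{\log(2/\gamma)}$. Cauchy--Schwarz discards the fact that the quadratic form is a single scalar, and you then pay the mean of a $\chi_d^2$ --- an order-$\sqrt{d}$ deterministic term, on top of whatever $\sqrt{d}$ is hidden in $\|\vec{Q}\vec{v}\|_2$ --- for what is in truth a mean-zero fluctuation. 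As written, your argument cannot reproduce $\kappa(n,\gamma)$ and $\varkappa(n,\gamma)$ as stated.

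The paper's actual step (its Lemma on $\vec{w}^T(\vec{U}^T\vec{Q}+\vec{Q}^T\vec{U})\vec{w}$) is more elementary and is the fix you need: conditionally on $\vec{Q}$, the cross term $2\vec{w}^T\vec{Q}^T\vec{U}\vec{w}$ is itself a scalar Gaussian with variance $\frac{4\sigma_u^2}{n}\|\vec{Q}\vec{w}\|_2^2\|\vec{w}\|_2^2$ (upper-bounded via $\|\vec{Q}\vec{w}\|_2\le\|\vec{Q}\|_F\|\vec{w}\|_2$), and the one-dimensional tail $\Pr[\lvert Z\rvert>t]\le e^{-t^2/2}$ with $t=\sqrt{2\log(2/\gamma)}$ gives $\lvert\vec{w}^T(\vec{U}^T\vec{Q}+\vec{Q}^T\vec{U})\vec{w}\rvert\le 2\sqrt{2}\,\sigma_u\|\vec{Q}\|_F\sqrt{\log(2/\gamma)/n}\,\|\vec{w}\|_2^2$ with probability at least $1-\gamma/2$. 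The dimension factor then enters deterministically through the paper's bound $\|\vec{Q}\|_F\le\sqrt{d}\,\lambda$, not through a $\chi_d^2$ tail, so your attribution of the $\sqrt{d}$ to the dimension of the noise vector is also mistaken. Two minor points: the bounded-domain parameter $B$ plays no role in this lemma (it belongs to the local-privacy corollary; only the smoothness-based control of $\|\vec{Q}\|_F$ enters here), and your probability budget $\gamma/4+\gamma/4+\gamma/2=\gamma$ does match the paper's accounting.
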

The proof can be found in \cref{lem:lem1}. This bound shows how $\Delta_0$ generated with $n$ samples is distant from $\sigma_u^2$. Setting $\sigma_u^2$ as in Algorithm \ref{al:safe_sharing}, we can get $\Delta_0 \ge \frac{2\lambda}{\epsilon}$ w.p. $1-\gamma$. Thus, the output of input perturbation guarantees $(\epsilon, \delta)$-differential privacy w.p. $1-\gamma$. The proof of Theorem \ref{th:proposed_dp} is obtained by incorporating the probabilistic bound on $\Delta_0$ into the privacy proof of \citep{kifer2012private}.

\subsection{Utility Analysis}\label{sec:input_general_bound}

The following theorem shows the excess empirical error bound of the model learned by input perturbation:
\begin{lemma}\label{lm:empirical_bound}
	Let $\vec{w}^{in}$ be the output of \cref{al:proposed}. If $\Delta>\frac{2\lambda}{\epsilon}$ and examples are randomized by \cref{al:safe_sharing}, w.p. at least $1-\gamma-\beta$ the bound of the excess empirical risk is
\begin{multline}
 J(\vec{w}^{in};D) - J(\hat{\vec{w}};D)
  \le \frac{4d\zeta^2 (8\log \frac{4}{\delta} + 4\epsilon) \log\frac{1}{\beta}}{n\epsilon^2 \Delta} + \frac{\Delta }{2n} \| \hat{\vec{w}}\|_2^2 + \frac{\sigma_u^2 - \frac{2\lambda}{\epsilon}}{2n} \| \hat{\vec{w}}\|_2^2 \nonumber \\
   + \frac{\sigma_u^2\sqrt{\log\frac{4}{\gamma}} + \sigma_u^2\frac{\log\frac{4}{\gamma}}{\sqrt{n}} + \sigma_u\lambda\sqrt{2d\log\frac{2}{\gamma}}}{n\sqrt{n}} \| \hat{\vec{w}}\|_2^2 \nonumber
\end{multline}
\end{lemma}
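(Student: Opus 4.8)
The plan is to follow the standard comparison template for perturbation-based private ERM: expand the noisy objective $J^{in}$ into the true objective $J$ plus a random linear and a random quadratic term, then bound the excess risk through the optimality of the two minimizers, while carefully decoupling the two independent noise sources. Expanding Eq.~\ref{eq:input_function} with $\tilde{\vec q}_i=\vec q_i+\vec u_i$ and $\tilde{\vec p}_i=\vec p_i-\vec r_i$ (the rearrangement underlying Eq.~\ref{eq:input_perturbation_function_edited}) gives $J^{in}(\vec w;\tilde D)=J^{in}_0(\vec w)+\vec b^T\vec w$, where $\vec b=\frac1n\sum_i\vec r_i$, $\vec Q=[\vec q_1,\dots,\vec q_n]^T$, $\vec U=[\vec u_1,\dots,\vec u_n]^T$, $\Delta_{in}=\Delta-\frac{2\lambda}{\epsilon}$, and
\[
 J^{in}_0(\vec w)=J(\vec w;D)+\tfrac{1}{2n}\vec w^T\vec M\vec w+\tfrac{\Delta_{in}}{2n}\|\vec w\|_2^2,\qquad \vec M=\vec U^T\vec U+\vec U^T\vec Q+\vec Q^T\vec U.
\]
On the event of probability $1-\gamma$ supplied by \cref{lem:lem1} we have $\varkappa(n,\gamma)\mathbb{I}\preceq\vec M\preceq\kappa(n,\gamma)\mathbb{I}$; since the choice of $\sigma_u^2$ in \cref{al:safe_sharing} forces $\varkappa(n,\gamma)\ge\frac{2\lambda}{\epsilon}$, the function $J^{in}_0$ is strongly convex over $\family{W}$ with modulus $\mu\ge\frac{\varkappa(n,\gamma)+\Delta_{in}}{n}\ge\frac{\Delta}{n}$ (the empirical loss contributes the positive-semidefinite Hessian $\frac1n\vec Q^T\vec Q$).

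Next I would telescope through the auxiliary minimizer $\vec v=\argmin_{\vec w\in\family{W}}J^{in}_0(\vec w)$, writing $J(\vec w^{in};D)-J(\hat{\vec w};D)=T_1+T_2+T_3+T_4$ with $T_1=J(\vec w^{in};D)-J^{in}_0(\vec w^{in})$, $T_2=J^{in}_0(\vec w^{in})-J^{in}_0(\vec v)$, $T_3=J^{in}_0(\vec v)-J^{in}_0(\hat{\vec w})$, and $T_4=J^{in}_0(\hat{\vec w})-J(\hat{\vec w};D)$. Three of these are immediate: $T_1=-\tfrac{1}{2n}(\vec w^{in})^T\vec M\vec w^{in}-\tfrac{\Delta_{in}}{2n}\|\vec w^{in}\|_2^2\le0$ because $\vec M\succeq0$ and $\Delta_{in}\ge0$; $T_3\le0$ by optimality of $\vec v$; and $T_4=\tfrac{1}{2n}\hat{\vec w}^T\vec M\hat{\vec w}+\tfrac{\Delta_{in}}{2n}\|\hat{\vec w}\|_2^2\le\tfrac{\kappa(n,\gamma)+\Delta_{in}}{2n}\|\hat{\vec w}\|_2^2$ by the upper eigenvalue bound of \cref{lem:lem1}. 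Substituting the expression for $\kappa(n,\gamma)$ and expanding, $T_4$ reproduces exactly the last three summands of the claimed bound.

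It then remains to control the middle term $T_2$, which captures the effect of the random linear perturbation $\vec b^T\vec w$ on the $\mu$-strongly convex $J^{in}_0$. Combining the first-order optimality of $\vec v$ for $J^{in}_0$ and of $\vec w^{in}$ for $J^{in}_0+\vec b^T\vec w$ with the strong-convexity inequality (a short argument maximizing a quadratic in $\|\vec w^{in}-\vec v\|_2$) yields $T_2\le\frac{\|\vec b\|_2^2}{2\mu}\le\frac{n\|\vec b\|_2^2}{2\Delta}$. Since $\vec b\sim\mathcal{N}(0,\frac{\sigma_b^2}{n^2}\mathbb{I})$, the quantity $\frac{n^2}{\sigma_b^2}\|\vec b\|_2^2$ is $\chi^2_d$, and a standard chi-square tail bound gives $\|\vec b\|_2^2\le\frac{8d\sigma_b^2\log(1/\beta)}{n^2}$ with probability $1-\beta$; substituting $\sigma_b^2=\frac{\zeta^2(8\log(2/\delta')+4\epsilon)}{\epsilon^2}$ and $\delta'=\delta/2$ then reproduces the first summand $\frac{4d\zeta^2(8\log\frac{4}{\delta}+4\epsilon)\log\frac{1}{\beta}}{n\epsilon^2\Delta}$. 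A union bound over the \cref{lem:lem1} event and the tail event for $\vec b$ delivers the overall $1-\gamma-\beta$ guarantee.

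I expect the principal obstacle to be the clean decoupling of the two independent noise sources. Comparing $J^{in}$ directly at $\hat{\vec w}$ versus $\vec w^{in}$ leaves a cross term $\vec b^T(\hat{\vec w}-\vec w^{in})$ for which no $O(1/n)$ control is available without first bounding $\|\hat{\vec w}-\vec w^{in}\|_2$; routing the comparison through the de-linearized minimizer $\vec v$ is precisely what converts that cross term into the benign quadratic-in-$\vec b$ quantity $\frac{\|\vec b\|_2^2}{2\mu}$. The second delicate point is verifying that the $\sigma_u^2$ threshold in \cref{al:safe_sharing} certifies $\varkappa(n,\gamma)\ge\frac{2\lambda}{\epsilon}$, since this is exactly what upgrades the strong-convexity modulus from $\frac{\varkappa(n,\gamma)+\Delta_{in}}{n}$ to the $\frac{\Delta}{n}$ appearing in the denominator of the leading term.
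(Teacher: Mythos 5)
Your proposal matches the paper's own proof essentially step for step: your auxiliary minimizer $\vec{v}$ is exactly the paper's $\vec{w}^{\#}$ (its Lemmas \ref{lem:8}--\ref{lem:lem10} implement your $T_1+T_2+T_3+T_4$ telescoping through the de-linearized objective), and the final step likewise combines \cref{lm:regular_cond} (giving $\frac{2\lambda}{\epsilon}\le\Delta_0\le\kappa(n,\gamma)$ w.p.\ $1-\gamma$ under the Algorithm \ref{al:safe_sharing} choice of $\sigma_u^2$) with a Gaussian-norm tail bound on $\vec{b}$ and a union bound. The only cosmetic difference is that you bound $T_2$ by $\frac{\|\vec{b}\|_2^2}{2\mu}$ in a single strong-convexity step, whereas the paper routes through $\|\vec{w}^{\#}-\vec{w}^{in}\|_2\le\frac{2\|\vec{b}\|_2}{\Delta+\Delta_0-\frac{2\lambda}{\epsilon}}$ (Lemmas \ref{lem:8} and \ref{lem:9}), losing a small constant factor that in both derivations is absorbed into the tail-bound constants to land on the same final expression.
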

In the right side of the bound, the first two terms of $O(1/n)$ are the same as the excess empirical risk of objective perturbation~\citep{kifer2012private}. The third term of $O(1/n)$ and the last term of $O(1/n^{3/2})$ are introduced by input perturbation. The same holds with expectation of the excess risk, as stated in the following theorem.
\begin{theorem}\label{th:excess_bound}
 Let $\vec{w}^{in}$ be the output of Algorithm \ref{al:proposed}. If $\Delta>\frac{2\lambda}{\epsilon}$, $n\ge 16\log\frac{8}{\delta}$, and examples are randomized by Algorithm \ref{al:safe_sharing}, expectation of the excess empirical risk $E \left[ J(\vec{w}^{in}; D ) - J(\hat{\vec{w}}; D ) \right] = O\left(\frac{\zeta\norm{\hat{\vec{w}}}_2\sqrt{d\log(1/\delta)}}{\epsilon n}\right)$ by setting $\Delta = \Theta\left(\frac{\sqrt{\zeta^2d\log(1/\delta)}}{\epsilon\norm{\hat{\vec{w}}}_2}\right)$ and $\sigma_u$ as the lowest value specified in \cref{al:safe_sharing}.
\end{theorem}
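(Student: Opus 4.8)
The plan is to upgrade the high-probability guarantee of \cref{lm:empirical_bound} to a bound on the expectation by integrating over the input noise directly, rather than by a union bound (which would cost an extra $\log n$ factor and, more seriously, would be derailed by the \emph{constant} failure probability $\gamma=\delta/2$ of the $\Delta_0$ estimate). First I would reuse the exact decomposition in \cref{eq:input_perturbation_function_edited}: writing $\vec{w}^{in}=\argmin_{\vec{w}}J^{in}(\vec{w};\tilde{D})$ and using $J^{in}(\vec{w}^{in};\tilde{D})\le J^{in}(\hat{\vec{w}};\tilde{D})$, the empirical excess risk is controlled by the objective-perturbation quantity
\begin{align}
J(\vec{w}^{in};D)-J(\hat{\vec{w}};D)\le \frac{\|\vec{b}\|_2^2}{2n^2\mu}+\frac{\mu_{\max}}{2}\|\hat{\vec{w}}\|_2^2,\nonumber
\end{align}
where $\vec{b}\sim\mathcal{N}(0,\sigma_b^2\mathbb{I})$ is the linear perturbation term, $\mu$ is the realization-wise strong-convexity parameter of $J^{in}$, and $\mu_{\max}$ is the largest curvature of the ridge added relative to $J$. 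This is exactly the per-realization inequality that the proof of \cref{lm:empirical_bound} establishes.

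The key structural observation that keeps the expectation finite is that $J^{in}$ is strongly convex for \emph{every} realization of the noise: its Hessian equals $\tfrac1n\tilde{\vec{Q}}^T\tilde{\vec{Q}}+\tfrac{\Delta-2\lambda/\epsilon}{n}\mathbb{I}\succeq\tfrac{\Delta-2\lambda/\epsilon}{n}\mathbb{I}$, since the perturbed Gram matrix $\tilde{\vec{Q}}^T\tilde{\vec{Q}}$ is positive semidefinite. Hence $1/\mu\le n/(\Delta-2\lambda/\epsilon)$ on every draw, so no bad event can blow up $E[1/\mu]$ and the crude fallback to the diameter bound $2\zeta\eta$ is unnecessary. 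Refining this on the good event, \cref{lem:lem1} with the $\sigma_u$ prescribed in \cref{al:safe_sharing} makes $\varkappa(n,\gamma)=2\lambda/\epsilon$, so $\lambda_{\min}(\vec{U}^T\vec{U}+\vec{U}^T\vec{Q}+\vec{Q}^T\vec{U})\ge 2\lambda/\epsilon$ with probability at least $1-\gamma$, upgrading the curvature to $\mu\ge\Delta/n$ there. Splitting the expectation over the good and bad events and using $\Delta\ge 4\lambda/\epsilon$ then yields $E[1/\mu]\le \tfrac{n}{\Delta}(1+\gamma)=O(n/\Delta)$. Because $\vec{b}$ (built from the $\vec{r}_i$) and $\mu$ (built from the $\vec{u}_i$) are independent, $E[\|\vec{b}\|_2^2/(2n^2\mu)]=\tfrac{E[\|\vec{b}\|_2^2]}{2n^2}E[1/\mu]$ with $E[\|\vec{b}\|_2^2]=d\sigma_b^2$; crucially this replaces the $\log(1/\beta)$ tail factor of \cref{lm:empirical_bound} by the clean $d\sigma_b^2$, so the leading term is $O(d\sigma_b^2/(n\Delta))$.

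Next I would evaluate the bias term. Taking expectations and using $E[\vec{U}^T\vec{U}]=\sigma_u^2\mathbb{I}$ and $E[\vec{U}^T\vec{Q}]=\vec{O}$, the ridge curvature has expectation $\tfrac{\Delta+\sigma_u^2-2\lambda/\epsilon}{n}$, so the bias contributes $\tfrac{\Delta}{2n}\|\hat{\vec{w}}\|_2^2+\tfrac{\sigma_u^2-2\lambda/\epsilon}{2n}\|\hat{\vec{w}}\|_2^2$ plus the concentration fluctuations already isolated in the last term of \cref{lm:empirical_bound}. It remains to show these extra pieces are $o(1/n)$: expanding the closed form of $\sigma_u$ in \cref{al:safe_sharing} around $a=\sqrt{\log(2/\gamma)/n}=0$ gives $\sigma_u^2-2\lambda/\epsilon=O(a\sqrt{d})=O(\sqrt{d\log(1/\delta)/n})$, so both the third term and the $O(1/n^{3/2})$ fluctuation term are $o(1/n)$. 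Here the hypothesis $n\ge16\log(8/\delta)$ is precisely what guarantees $a\le\tfrac14$ (hence $1-2a\ge\tfrac12$) and legitimizes this expansion.

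Finally, the two surviving $O(1/n)$ terms are $\tfrac{d\sigma_b^2}{n\Delta}=O\big(\tfrac{d\zeta^2\log(1/\delta)}{n\epsilon^2\Delta}\big)$ and $\tfrac{\Delta}{2n}\|\hat{\vec{w}}\|_2^2$, which depend on $\Delta$ oppositely; balancing them by AM--GM is minimized at $\Delta=\Theta\big(\tfrac{\zeta\sqrt{d\log(1/\delta)}}{\epsilon\|\hat{\vec{w}}\|_2}\big)$, the prescribed choice, and gives the claimed $O\big(\tfrac{\zeta\|\hat{\vec{w}}\|_2\sqrt{d\log(1/\delta)}}{\epsilon n}\big)$. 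I expect the main obstacle to be the expectation of the inverse curvature $E[1/\mu]$: the naive route through \cref{lm:empirical_bound} fails because its failure probability $\gamma=\delta/2$ is a constant, so the whole argument hinges on exploiting the realization-wise strong convexity $\mu\ge(\Delta-2\lambda/\epsilon)/n$ to keep $1/\mu$ uniformly bounded, together with the independence of $\vec{b}$ from $\mu$ to shed the $\log(1/\beta)$ penalty.
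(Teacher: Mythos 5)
Your proposal is correct in substance and reaches the theorem by a genuinely different route than the paper. The paper's own proof goes through the high-probability statement of \cref{lm:empirical_bound}: it sets $\beta=\gamma=\nu$, inverts the resulting bound $a\log(1/\nu)+b$ into the tail estimate $\p{X\ge s}\le 2e^{b/a}e^{-s/a}$, integrates ($E[X]\le \xi+\int_\xi^\infty \p{X\ge s}\,\mathrm{d}s$ with $\xi=b$, giving $E[X]\le 2a+b$), and only then balances $\Delta$ and invokes \cref{col:sigma_u} (which is indeed where $n\ge 16\log\frac{8}{\delta}$ enters, exactly as you identified). You instead compute the expectation directly: independence of $\vec{b}$ (built from the $\vec{r}_i$) and the curvature (built from the $\vec{u}_i$) factorizes the leading term into $E[\norm{\vec{b}}_2^2]\,E[1/\mu]=d\sigma_b^2\,E[1/\mu]$; the always-valid bound $\mu\ge(\Delta-2\lambda/\epsilon)/n$, coming from positive semidefiniteness of $\tilde{\vec{Q}}^T\tilde{\vec{Q}}$, keeps $E[1/\mu]=O(n/\Delta)$; and $E[\vec{U}^T\vec{U}]=\sigma_u^2\mathbb{I}$, $E[\vec{U}^T\vec{Q}]=\vec{O}$ give the bias in closed form, after which the same two $O(1/n)$ terms are balanced by the same $\Delta$. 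What your route buys: it sidesteps a genuine soft spot in the paper's integration, namely that the tail bound is used at arbitrarily small failure levels $\nu$ even though $\sigma_u$ in \cref{al:safe_sharing} is calibrated only at $\gamma=\delta/2$, so the event $\Delta_0\ge 2\lambda/\epsilon$ cannot be asserted with failure probability below that constant; your realization-wise strong convexity is precisely the ingredient that makes the bad event harmless (in fact, given your auxiliary assumption $\Delta\ge 4\lambda/\epsilon$ --- mild, but not in the theorem statement --- the good/bad split is superfluous, since $1/\mu\le 2n/\Delta$ holds uniformly). Two caveats to flag: your inference from \cref{lem:lem1} to $\lambda_{\min}(\vec{U}^T\vec{U}+\vec{U}^T\vec{Q}+\vec{Q}^T\vec{U})\ge 2\lambda/\epsilon$ overstates the lemma, which is a fixed-direction bound (via \cref{lm:wishart_dist}), not an eigenvalue bound --- though the paper commits the same conflation by treating $\Delta_0$ as direction-free, so you inherit rather than introduce this gap; and dropping the perturbed ridge evaluated at $\vec{w}^{in}$ requires its nonnegativity, which can fail when $\Delta_0(\vec{w}^{in})<-(\Delta-2\lambda/\epsilon)$ --- the paper's \cref{lem:lem10} assumes $\Delta_0\ge 0$ for exactly this reason, so your bias computation needs either that event controlled (its deficit has small expectation by the same Gaussian/Wishart tails, over a net if done uniformly) or the paper's convention adopted. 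Neither caveat affects the final rate $O\bigl(\zeta\norm{\hat{\vec{w}}}_2\sqrt{d\log(1/\delta)}/(\epsilon n)\bigr)$.
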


\subsection{Balancing Local Privacy and Utility}
The privacy parameters of the differential local privacy that satisfy the input perturbation are $(O(\sqrt{n\epsilon}),\delta)$. Unfortunately, the privacy level of the input perturbation becomes weaker as the sample size $n$ increases. However, the input perturbation can satisfy stronger differential local privacy by adjusting $\epsilon$. If the data contributors require stronger local privacy, we set $\epsilon \gets \alpha\epsilon$ for small $\alpha \in (0,1)$, which enables the input perturbation to satisfy $(O(\sqrt{\alpha n \epsilon}),\delta)$-differential local privacy. Such setting of $\epsilon$ results in a higher privacy guarantee of the published model as $(\alpha\epsilon,\delta)$-differentially private, and a lower utility as $O(1/\alpha n)$.

\section{Experiments}\label{sec:experiment}

In this section, we examine the performance of input perturbation by experimentation. As predicted by Theorem \ref{th:excess_bound}, under the same privacy guarantee and the optimal setting of $\sigma_u^2$ in input perturbation, the expectation of the excess empirical risk of the models learned with the input perturbation and the objective perturbation is the same as $O(\frac{\zeta\norm{\hat{\vec{w}}}_2\sqrt{d\log(1/\delta)}}{\epsilon n})$. We experimentally evaluate the difference between the input perturbation and the objective perturbation with real datasets while changing the size of training data and privacy parameters. We compared the performance of the input perturbation method~(Input) against two methods, namely, output perturbation with Laplace mechanism~(Output),~\citep{chaudhuri2011differentially}, and objective perturbation with the Gaussian mechanism (Obj-Gauss)~\citep{kifer2012private}.
We evaluated all approaches to learn the linear regression model and the logistic regression model. For the performance measure, the root mean squared error (RMSE) was used for the linear regression model and the prediction accuracy was used for the logistic regression model.
For regularization parameter tuning, with each method we found the best parameter for the largest size of training dataset, then used it for other sizes of the training dataset.

In each experiment, we randomly divided the examples into a training dataset and a test dataset with the ratio $4:1$; we trained the model with the training dataset and evaluated the performance measure with the test dataset. The average results over 100 trials were reported. We used IBM ILOG CPLEX Optimizer to optimize the objective function.

\begin{figure}[t]
  \centering
	\subfigure[$\epsilon=0.1$]{\includegraphics[width=0.45\textwidth]{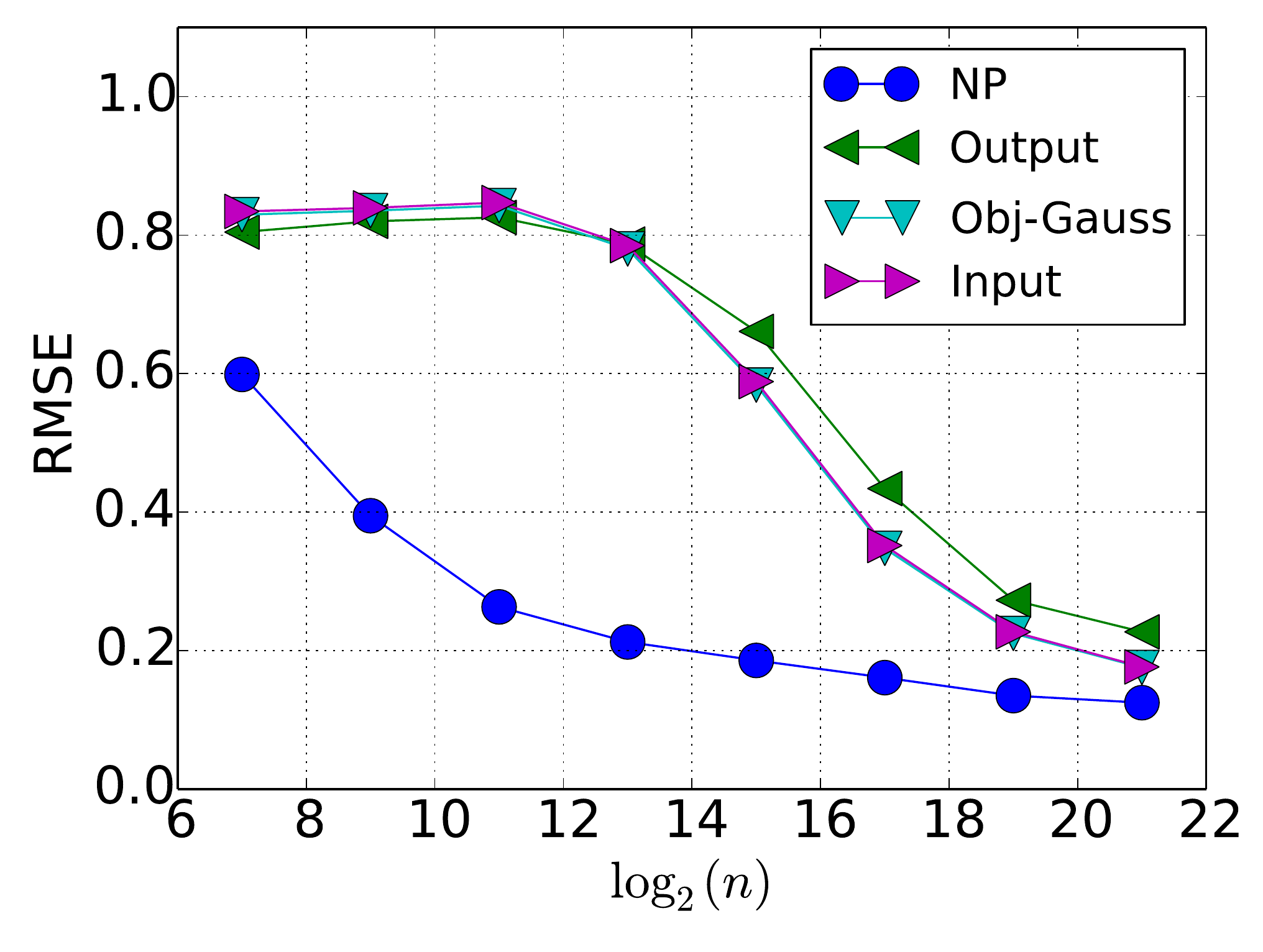}}
	\subfigure[$\epsilon=1.0$]{\includegraphics[width=0.45\textwidth]{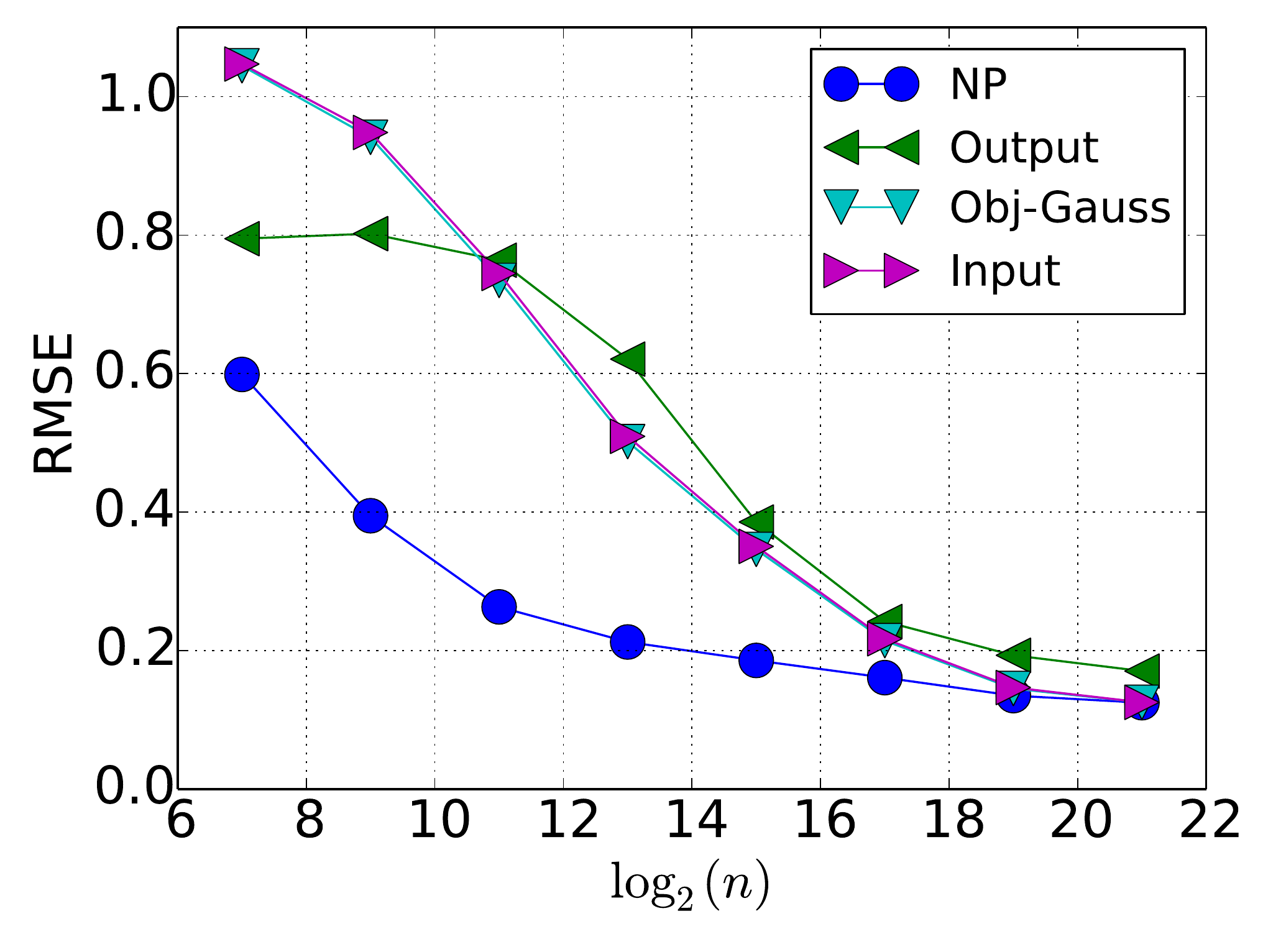}}
\caption{RMSEs of differentially private linear regression. The results were averaged over 100 trials while changing the example size $n$. We compared input perturbation~(Input), output perturbation with Laplace mechanism~(Output), and objective perturbation with Gaussian mechanism~(Obj-Gauss), and non-private linear regression as the baseline~(NP). }
		\label{fig:linear_result}
	\subfigure[$\epsilon=0.1$]{\includegraphics[width=0.45\textwidth]{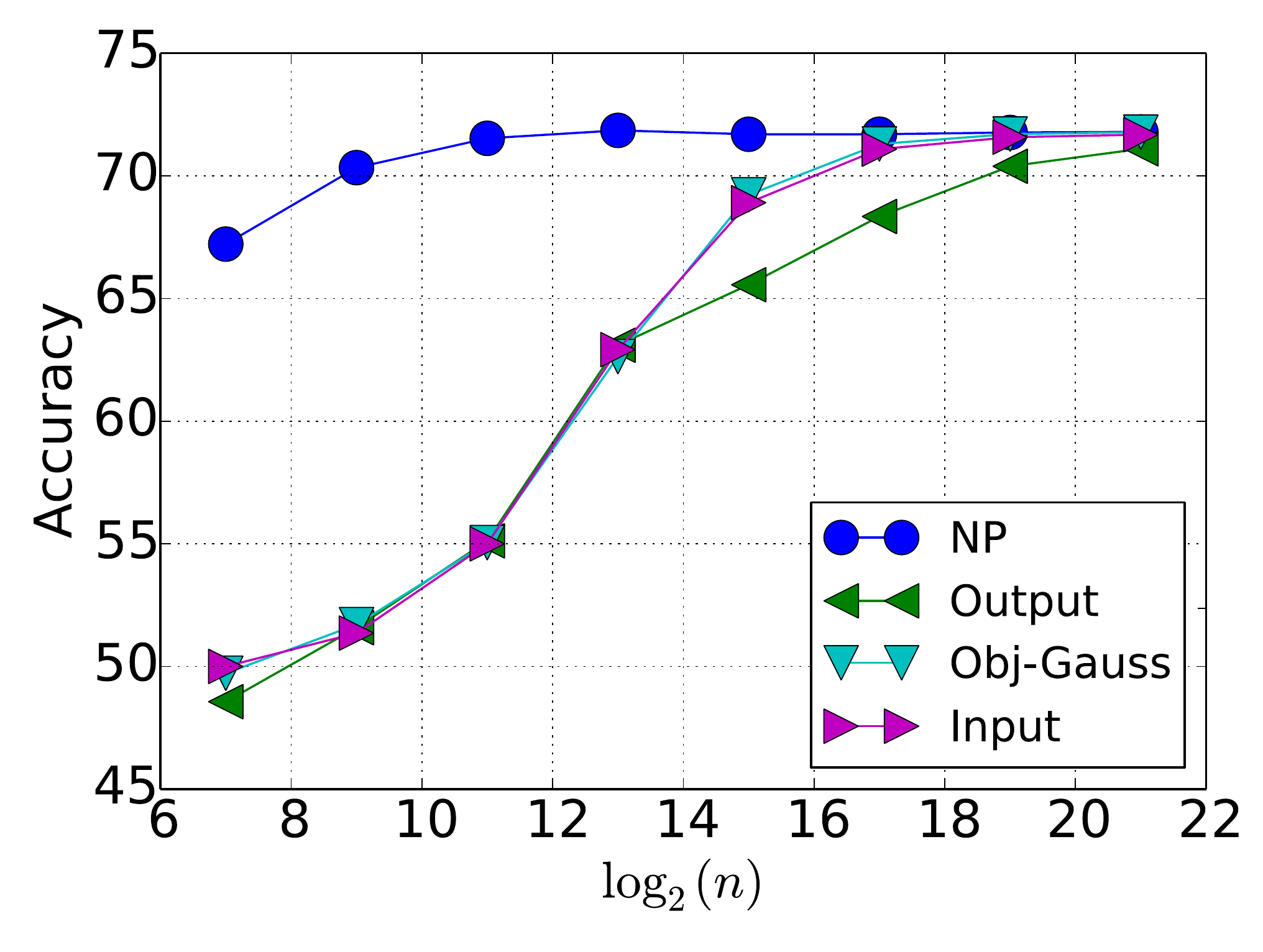}}
	\subfigure[$\epsilon=1.0$]{\includegraphics[width=0.45\textwidth]{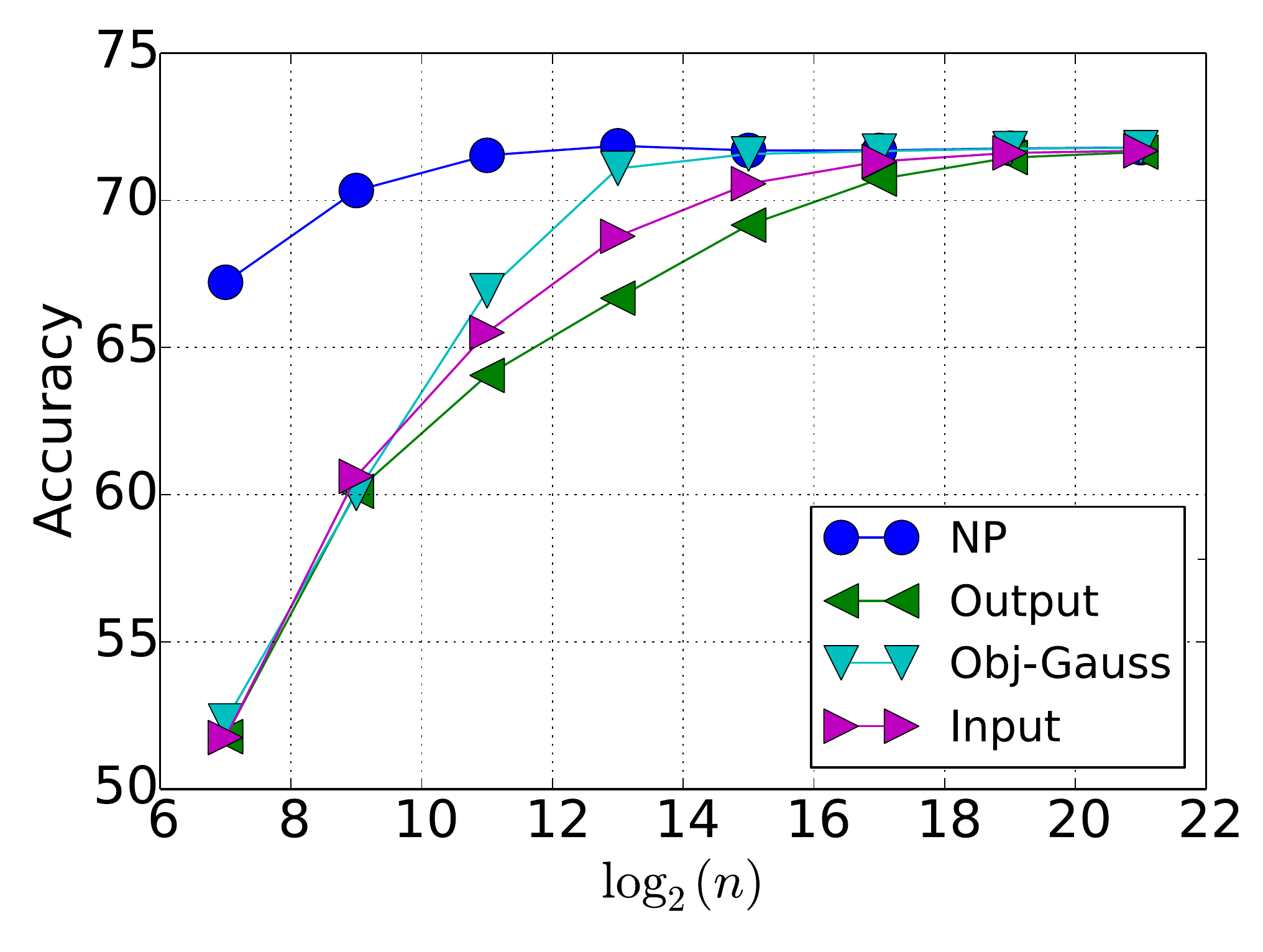}}
	\caption{Accuracy of differentially private logistic regression. The results were averaged over 100 trials while changing the example size $n$. We compared input perturbation~(Input), output perturbation with Laplace mechanism~(Output), and objective perturbation with Gaussian mechanism~(Obj-Gauss), and non-private linear regression as the baseline~(NP). }
	\label{fig:logistic_result}
\end{figure}

\subsection{Dataset and Preprocessing}
We used a dataset from \textit{Integrated Public Use Microdata Series:} (IPMS)~\citep{ipums:2014}, which contains $3,833,838$ census records collected in the US for year 2000 after removing unknown values and missing values. We performed an experiment with $n$, the size of the training dataset, by each $4$ times from $2^7(=128)$ to $2^{21}(=2,097,152)$. We set privacy parameters $\epsilon=\{0.1, 1.0\}, \delta=0.01$.

The IPMS dataset originally contained $13$ features. The binary status \textit{Marital status} was transformed into two attributes: \textit{Is Single} and \textit{Is Married}. Hence, $14$ attributes were employed. For linear regression model learning, \textit{Annual Income}, a continuous feature, was used as the prediction target. For logistic regression model learning, we converted \textit{Annual Income} into a binary attribute and used it as the label, in which values higher than a predefined threshold were mapped to $1$, and $0$ otherwise. In both kinds of model learning, the remaining attributes were used for the features. As preprocessing, we scaled the feature values so that the norm of each feature vector was at most $1$; {\em annual income} used as the prediction target was scaled so that the norm was at most $1$ before transformation to the binary label.

\subsection{Results}
Figure \ref{fig:linear_result} (a) and (b) show the experimental results of differentially private linear regression model learning. In Figure \ref{fig:linear_result}, the horizontal axis shows the logarithmic scale of the example size $n$, and the vertical axis shows the average RMSE of the comparative methods. As predicted by the theorem, the results show that the average RMSE of input perturbation approaches the RMSE of non-privacy linear regression as $n$ increases. Therefore, when the number of instances is very large, the performance of input perturbation is almost the same as that of non-privacy, as confirmed by Theorem \ref{th:excess_bound}.

Input perturbation is an approximation of objective perturbation with the Gaussian mechanism. So, at the limit of $n$, the behavior of input perturbation is equivalent to that of objective perturbation with the Gaussian mechanism. This can be confirmed from the results, too. Even with small $n$, we can see that the RMSEs of Input and Obj-Gauss are still quite close in both figures. This is because the difference of the excess risk of objective and input perturbation is in $O(1/n^{3/2})$.

\subsection{Differentially Private Logistic Regression Model Learning}

Figure \ref{fig:logistic_result} shows the experimental results of differentially private logistic regression model learning. In Figure \ref{fig:logistic_result}, the horizontal axis shows the logarithmic scale of the example size $n$, and the vertical axis shows the average accuracy of comparative methods. Similar to linear regression, the average accuracy of input perturbation is almost the same as that of objective perturbation with the Gaussian mechanism, when the example size $n$ is large. Because the average accuracy of input perturbation and objective perturbation approach the accuracy of non-privacy logistic regression as $n$ increases. However, when $n$ is small, the accuracy of input perturbation is slightly lower than that of objective perturbation with the Gaussian mechanism. This behavior can be caused by the approximation error of the logistic loss function.

\section{Conclusion}\label{sec:conclusion}
In this study, we propose a novel framework for differentially private ERM, input perturbation. In contrast to objective perturbation, input perturbation allows data contributors to take part in the process of privacy preservation of model learning. From the privacy analysis of the data releasing of the data contributors, the data collection process in the input perturbation satisfies $(O(\sqrt{n\epsilon},\delta)$-differential local privacy. Thus, from the perspective of data contributors, data collection with input perturbation can be preferable.

Models with randomized examples following the scheme of input perturbation are guaranteed to satisfy $(\epsilon, \delta)$-differential privacy. To achieve this approximation with randomization by independent data contributors, input perturbation requires that the loss function be quadratic with respect to the model parameter, $\vec{w}$. Applying other loss functions in our proposed method is remained as an area of our future work.

We compared the utility analysis and the empirical evaluation of input perturbation with those of output and objective perturbations in terms of the excess empirical risk against the non-privacy-preserving ERM. We show that the excess empirical risk of the model learned with input perturbation is $O(1/n)$, which is equivalent to that of objective perturbation in the optimal setting of $\sigma_u^2$ for every data contributors.

\subsubsection*{Acknowledgments.}

\bibliographystyle{plainnat}
\bibliography{paper}

\begin{thebibliography}{17}
\providecommand{\natexlab}[1]{#1}
\providecommand{\url}[1]{\texttt{#1}}
\expandafter\ifx\csname urlstyle\endcsname\relax
  \providecommand{\doi}[1]{doi: #1}\else
  \providecommand{\doi}{doi: \begingroup \urlstyle{rm}\Url}\fi

\bibitem[Bassily et~al.(2014)Bassily, Smith, and Thakurta]{Bassily2014}
Raef Bassily, Adam Smith, and Abhradeep Thakurta.
\newblock {Private empirical risk minimization: Efficient algorithms and tight
  error bounds}.
\newblock In \emph{Proceedings - Annual IEEE Symposium on Foundations of
  Computer Science, FOCS}, pages 464--473. IEEE, oct 2014.
\newblock ISBN 9781479965175.
\newblock \doi{10.1109/FOCS.2014.56}.

\bibitem[Chaudhuri et~al.(2011)Chaudhuri, Monteleoni, and
  Sarwate]{chaudhuri2011differentially}
Kamalika Chaudhuri, Claire Monteleoni, and Anand~D Sarwate.
\newblock Differentially private empirical risk minimization.
\newblock \emph{The Journal of Machine Learning Research}, 12:\penalty0
  1069--1109, 2011.

\bibitem[Dasgupta and Schulman(2007)]{dasgupta2007probabilistic}
Sanjoy Dasgupta and Leonard Schulman.
\newblock A probabilistic analysis of em for mixtures of separated, spherical
  gaussians.
\newblock \emph{The Journal of Machine Learning Research}, 8:\penalty0
  203--226, 2007.

\bibitem[Duchi et~al.(2013)Duchi, Jordan, and Wainwright]{duchi2013local}
John~C Duchi, Michael~I Jordan, and Martin~J Wainwright.
\newblock Local privacy and statistical minimax rates.
\newblock In \emph{Foundations of Computer Science (FOCS), 2013 IEEE 54th
  Annual Symposium on}, pages 429--438. IEEE, 2013.

\bibitem[Dwork et~al.(2006{\natexlab{a}})Dwork, Kenthapadi, McSherry, Mironov,
  and Naor]{dwork2006our}
Cynthia Dwork, Krishnaram Kenthapadi, Frank McSherry, Ilya Mironov, and Moni
  Naor.
\newblock Our data, ourselves: Privacy via distributed noise generation.
\newblock In \emph{Advances in Cryptology-EUROCRYPT 2006}, pages 486--503.
  Springer, 2006{\natexlab{a}}.

\bibitem[Dwork et~al.(2006{\natexlab{b}})Dwork, McSherry, Nissim, and
  Smith]{dwork2006calibrating}
Cynthia Dwork, Frank McSherry, Kobbi Nissim, and Adam Smith.
\newblock Calibrating noise to sensitivity in private data analysis.
\newblock In \emph{Theory of Cryptography}, pages 265--284. Springer,
  2006{\natexlab{b}}.

\bibitem[Dwork et~al.(2014)Dwork, Roth, et~al.]{dwork2014algorithmic}
Cynthia Dwork, Aaron Roth, et~al.
\newblock The algorithmic foundations of differential privacy.
\newblock \emph{Foundations and Trends in Theoretical Computer Science},
  9\penalty0 (3-4):\penalty0 211--407, 2014.

\bibitem[Evfimievski et~al.(2003)Evfimievski, Gehrke, and
  Srikant]{evfimievski2003limiting}
Alexandre Evfimievski, Johannes Gehrke, and Ramakrishnan Srikant.
\newblock Limiting privacy breaches in privacy preserving data mining.
\newblock In \emph{Proceedings of the twenty-second ACM SIGMOD-SIGACT-SIGART
  symposium on Principles of database systems}, pages 211--222. ACM, 2003.

\bibitem[Jain and Thakurta(2014)]{jain2014near}
Prateek Jain and Abhradeep~Guha Thakurta.
\newblock (near) dimension independent risk bounds for differentially private
  learning.
\newblock In \emph{Proceedings of The 31st International Conference on Machine
  Learning}, pages 476--484, 2014.

\bibitem[Kairouz et~al.(2014)Kairouz, Oh, and Viswanath]{kairouz2014extremal}
Peter Kairouz, Sewoong Oh, and Pramod Viswanath.
\newblock Extremal mechanisms for local differential privacy.
\newblock In \emph{Advances in Neural Information Processing Systems}, pages
  2879--2887, 2014.

\bibitem[Kasiviswanathan et~al.(2011)Kasiviswanathan, Lee, Nissim,
  Raskhodnikova, and Smith]{kasiviswanathan2011can}
Shiva~Prasad Kasiviswanathan, Homin~K Lee, Kobbi Nissim, Sofya Raskhodnikova,
  and Adam Smith.
\newblock What can we learn privately?
\newblock \emph{SIAM Journal on Computing}, 40\penalty0 (3):\penalty0 793--826,
  2011.

\bibitem[Kifer et~al.(2012)Kifer, Smith, and Thakurta]{kifer2012private}
Daniel Kifer, Adam Smith, and Abhradeep Thakurta.
\newblock Private convex empirical risk minimization and high-dimensional
  regression.
\newblock \emph{Journal of Machine Learning Research}, 1:\penalty0 41, 2012.

\bibitem[Laurent and Massart(2000)]{laurent:2000}
B{\'e}atrice Laurent and Pascal Massart.
\newblock Adaptive estimation of a quadratic functional by model selection.
\newblock \emph{Annals of Statistics}, pages 1302--1338, 2000.

\bibitem[Lei(2011)]{lei2011differentially}
Jing Lei.
\newblock Differentially private m-estimators.
\newblock In \emph{Advances in Neural Information Processing Systems}, pages
  361--369, 2011.

\bibitem[Minnesota(2014)]{ipums:2014}
Population~Center Minnesota.
\newblock Integrated public use microdata series, international: Version 6.3
  [machine-readable database].
\newblock Minneapolis: University of Minnesota, 2014.

\bibitem[Rao(2009)]{rao:2009}
C~Radhakrishna Rao.
\newblock \emph{Linear statistical inference and its applications}, volume~22.
\newblock John Wiley \& Sons, 2009.

\bibitem[Wainwright et~al.(2012)Wainwright, Jordan, and
  Duchi]{wainwright2012privacy}
Martin~J Wainwright, Michael~I Jordan, and John~C Duchi.
\newblock Privacy aware learning.
\newblock In \emph{Advances in Neural Information Processing Systems}, pages
  1430--1438, 2012.

\end{thebibliography}

\appendix

\section{Notation}\label{sec:notation}
Here, we summerize the notations in \cref{tab:notation}.
\begin{table}[H]
	\centering
	\caption{Table of notations}
	\label{tab:notation}
	\begin{tabular}{l|l}
		\hline
		Notation & Description \\
    \hline\hline
    $\dom{X} \subseteq \RealSet^d$ & domain of the $d$-dimentional feature vector \\
    $\dom{Y}$ & output domain \\
		$D \in \dom{Z}^n = (\dom{X}\times\dom{Z})^n$ & database of $n$ examples \\
		$z_i = (\vec{x}_i,y_i) \in \dom{Z}$ & the i-th example of data base $D$ \\
		$\vec{q}_i \in \RealSet^d$ & a vector computed by $\vec{q}(\vec{x}_i,y_i)$ \\
		$\vec{p}_i \in \RealSet^d$ & a vector computed by $\vec{p}(\vec{x}_i,y_i)$ \\
		$\vec{Q} \in \RealSet^{n\times d}$ & $[\vec{q}_1,\dots,\vec{q}_n]^T$ \\
		$\vec{p} \in \RealSet^d$ & $\sum_{i=1}^n\vec{p}_i$ \\
    \hline
    $\dom{W} \subseteq \RealSet^d$ & domain of the model parameter \\
		$\vec{w} \in \dom{W}$ & the model parameter \\
		$\eta$ & the upper bound of $\norm{\vec{w}}_2$ for any $\vec{w} \in \dom{W}$ \\
		$\ell:\dom{W}\times(\dom{X}\times\dom{Y})\to\RealSet$ & the loss function \\
		$\zeta$ & the upper bound of  $\norm{\nabla\ell}_2$ \\
		$\lambda$ & the upper bound of $\norm{\nabla^2\ell}_2$ \\
		$\hat{J}(\vec{w};D)$ & the average loss function \\
		$\hat{\vec{w}}$ & the optimal parameter of the average loss function \\
		$J(\vec{w};D)$ & the objective function of ERM \\
		$\vec{w}^*$ & the optimal parameter of ERM \\
    \hline
		$\epsilon,\delta$ & the differential privacy parameters \\
		$\sigma_b^2$ & the variance of Gaussian distribution \\
		$\sigma_u^2$ & the variance of Gaussian distribution \\
		$\tilde{\vec{q}}_i$ & $\vec{q}_i$ is added with noise from $\mathcal{N}(0,\frac{\sigma_u^2}{n})$ \\
		$\tilde{\vec{p}}_i$ & $\vec{p}_i$ is added with noise from $\mathcal{N}(0,\frac{\sigma_b^2}{n})$\\
		$\tilde{\vec{Q}}$ & $[\tilde{\vec{q}}_1,\dots,\tilde{\vec{q}}_n]^T$ \\
		$\tilde{\vec{p}}$ & $\sum_{i=1}^n\tilde{\vec{p}}_i$ \\
		$\tilde{D}$ & $D$ with noise added \\\hline
		$J^{out}(\vec{w};D)$ & the objective function of output perturbation \\
		$\vec{w}^{out}$ & the optimal parameter of output perturbation \\
		$J^{obj}(\vec{w};D)$ & the objective function of objective perturbation \\
		$\vec{w}^{obj}$ & the optimal parameter of objective perturbation \\
		$J^{in}(\vec{w};\tilde{D})$ & the objective function of input perturbation \\
		$\vec{w}^{in}$ & the optimal parameter of input perturbation \\ \hline
	\end{tabular}
\end{table}
\section{Proof of Lemma \ref{lem:lem1}}\label{app:regularization_bound}
We first introduce known results in order to prove this lemma.
\setcounter{lemma}{2}
\begin{lemma}[\citet{rao:2009}] \label{lm:wishart_dist}
  Let $\vec{Z}$ be a $p\times p$ random matrix drawn from Wishart distribution $\vec{Z} \sim \mathcal{W}_p(\vec{V},m)$ with $m$ degrees of freedom and variance matrix $\vec{V}$. Let $\vec{v}$ be a non-zero $p\times 1$ constant vector. Then,
	  \begin{align}
			\vec{v}^T\vec{Z}\vec{v} \sim \sigma_v^2\chi_m^2, \nonumber
		\end{align}
	where $\chi_m^2$ is the chi-squared distribution with $m$ degrees of freedom and $\sigma_v^2 = \vec{v}^TV\vec{v}$.  (Note that $\sigma_v^2$ is a constant; it is positive because $V$ is positive definite).
\end{lemma}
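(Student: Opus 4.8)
The plan is to prove the lemma directly from the standard construction of the Wishart distribution as an outer-product sum of Gaussian vectors. Recall that $\vec{Z} \sim \mathcal{W}_p(\vec{V}, m)$ admits the representation $\vec{Z} = \sum_{i=1}^m \vec{g}_i \vec{g}_i^T$, where $\vec{g}_1, \dots, \vec{g}_m$ are independent draws from $\mathcal{N}(0, \vec{V})$; equivalently $\vec{Z} = \vec{G}^T \vec{G}$ for an $m \times p$ matrix $\vec{G}$ whose rows are the $\vec{g}_i^T$. First I would substitute this representation into the quadratic form to obtain
\begin{align}
  \vec{v}^T \vec{Z} \vec{v} = \sum_{i=1}^m \vec{v}^T \vec{g}_i \vec{g}_i^T \vec{v} = \sum_{i=1}^m (\vec{v}^T \vec{g}_i)^2, \nonumber
\end{align}
so the matrix quadratic form collapses to a sum of squares of the scalar random variables $\vec{v}^T \vec{g}_i$.

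The next step is to identify the law of each $\vec{v}^T \vec{g}_i$. Since $\vec{g}_i \sim \mathcal{N}(0, \vec{V})$ and $\vec{v}$ is a fixed vector, the scalar $\vec{v}^T \vec{g}_i$ is a fixed linear functional of a Gaussian vector and is therefore itself Gaussian, with mean $0$ and variance $\vec{v}^T \vec{V} \vec{v} = \sigma_v^2$. Because $\vec{V}$ is positive definite and $\vec{v}$ is non-zero, we have $\sigma_v^2 > 0$, so I may define the normalized variables $t_i = \vec{v}^T \vec{g}_i / \sigma_v$, which are mutually independent (inherited from the independence of the $\vec{g}_i$) and each distributed as $\mathcal{N}(0,1)$.

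Finally I would rewrite the sum as $\vec{v}^T \vec{Z} \vec{v} = \sigma_v^2 \sum_{i=1}^m t_i^2$ and invoke the definition of the chi-squared distribution: a sum of the squares of $m$ independent standard normal variables is $\chi_m^2$. This yields $\vec{v}^T \vec{Z} \vec{v} \sim \sigma_v^2 \chi_m^2$, as claimed. There is no serious obstacle in this argument; the only points requiring care are fixing the precise construction of the Wishart law being used (the sum-of-outer-products form, as in \citet{rao:2009}) and confirming strict positivity of $\sigma_v^2$ so that the normalization is well defined, both of which are immediate under the stated hypotheses.
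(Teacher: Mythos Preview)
Your argument is correct and is the standard derivation of this fact. Note, however, that the paper does not supply its own proof of this lemma: it is quoted as a known result from \citet{rao:2009} and used without proof, so there is no ``paper's proof'' to compare against.
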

\begin{lemma}[\citet{laurent:2000}] \label{lm:chi_bound}
  Let $Z \sim\chi_m^2$. Then, for any $t > 0$,
	\begin{align}
		\p{Z - m \geq 2\sqrt{mt}+2t} \leq \exp(-t).
		\label{eq:upper_bound}
	\end{align}
  Also, for any $t > 0$,
	\begin{align}
		\p{m - Z \geq 2\sqrt{mt}} \leq \exp(-t).
		\label{eq:lower_bound}
	\end{align}
\end{lemma}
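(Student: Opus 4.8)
The plan is to prove both tail inequalities by the Cramér–Chernoff (exponential Markov) method, exploiting the fact that the chi-squared law has an explicit moment generating function. I would write $Z = \sum_{i=1}^m X_i^2$ with $X_1,\dots,X_m$ independent standard normals, so that $E[e^{sZ}] = (1-2s)^{-m/2}$ for $s < 1/2$ and hence the centered cumulant generating function is $\psi(s) := \log E[e^{s(Z-m)}] = -sm - \frac{m}{2}\log(1-2s)$. Everything then reduces to controlling $\psi$ on the two relevant ranges of $s$ and optimizing the bound $\p{Z - m \geq u} \leq \exp(\psi(s) - su)$.

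For the upper tail, the first step is to show that $Z-m$ is sub-gamma, namely $\psi(s) \leq \frac{m s^2}{1-2s}$ for $0 \leq s < 1/2$. Substituting $x = 2s$, this is the scalar inequality $-[x+\log(1-x)] \leq \frac{x^2}{2(1-x)}$, which I would verify by the term-by-term power-series comparison $\sum_{k\geq 2} x^k/k \leq \frac{1}{2}\sum_{k\geq 2} x^k$, valid because $1/k \leq 1/2$ for $k \geq 2$. Feeding this into the Chernoff bound, I would then minimize $\frac{m s^2}{1-2s} - su$ over $s$; the substitution $w = 1-2s$ linearizes the problem, giving minimizer $w^\ast = \sqrt{m/(m+2u)}$ and the closed-form estimate $\p{Z - m \geq u} \leq \exp\bigl(-\tfrac14(\sqrt{m+2u}-\sqrt{m})^2\bigr)$. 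Setting $u = 2\sqrt{mt}+2t$, the exponent reduces to requiring $\sqrt{m+2u} \geq \sqrt{m}+2\sqrt{t}$, which holds with equality, so the right-hand side is exactly $e^{-t}$, giving \eqref{eq:upper_bound}.

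The lower tail is easier, since only a one-sided sub-Gaussian estimate is needed. Working with $s > 0$, I would use $\phi(s) := \log E[e^{-s(Z-m)}] = sm - \frac{m}{2}\log(1+2s)$ and prove the clean quadratic bound $\phi(s) \leq m s^2$; after the substitution $x = 2s$ this is $x - \log(1+x) \leq x^2/2$, which follows because $f(x) = \tfrac{x^2}{2} - x + \log(1+x)$ satisfies $f(0)=0$ and $f'(x) = \frac{x^2}{1+x} \geq 0$. The Chernoff bound then gives $\p{m - Z \geq u} \leq \inf_{s>0}\exp(m s^2 - su) = \exp(-\tfrac{u^2}{4m})$, attained at $s = u/(2m)$, and substituting $u = 2\sqrt{mt}$ yields the exponent $-t$, which is \eqref{eq:lower_bound}.

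The conceptual content lies entirely in the two scalar MGF inequalities, both of which are routine convexity and power-series estimates. I expect the only delicate step to be the upper-tail optimization: unlike the lower tail, its rate function has no one-line closed form, and one must find the right change of variable ($w = 1-2s$) to carry out the minimization and, crucially, to recover the precise constants $2\sqrt{mt}+2t$ rather than a loose constant multiple. This is exactly the computation underlying the Laurent–Massart estimate, so while it requires care it presents no genuine conceptual obstacle.
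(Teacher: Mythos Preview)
Your argument is correct: the sub-gamma bound $\psi(s)\le ms^2/(1-2s)$ via the term-by-term series comparison, the optimization through $w=1-2s$ yielding the closed form $-\tfrac14(\sqrt{m+2u}-\sqrt{m})^2$, and the specialization $u=2\sqrt{mt}+2t$ all check out exactly, as does the sub-Gaussian lower-tail computation.

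The paper, however, does not prove this lemma at all; it simply quotes it as a known result from \citet{laurent:2000} and uses it as a black box inside the proof of Lemma~\ref{lem:lem1}. So there is nothing to compare against: you have supplied a complete Cram\'er--Chernoff proof where the paper only gives a citation. Your derivation is in fact the one underlying the original Laurent--Massart estimate, so it is entirely appropriate here.
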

\begin{lemma}\label{lm:gauss_bound}
	Let $Z\sim\mathcal{N}(0,1)$, then for all $t>1$, we have
	\begin{align}
		\p{\abs{Z}>t}\leq e^{-t^2/2}.\nonumber
	\end{align}
\end{lemma}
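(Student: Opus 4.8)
The plan is to reduce the two-sided tail to a one-sided integral and then apply the standard Mills-ratio estimate, exploiting the restriction $t>1$ to absorb the leading constant. First I would use the symmetry of the standard normal: since the density of $Z\sim\mathcal{N}(0,1)$ is even, $\p{\abs{Z}>t}=2\p{Z>t}=\frac{2}{\sqrt{2\pi}}\int_t^\infty e^{-x^2/2}\,dx$. This replaces the quantity of interest by a tail integral that can be bounded directly, without any optimization step.

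Next I would establish the one-sided bound $\int_t^\infty e^{-x^2/2}\,dx\le \frac{1}{t}e^{-t^2/2}$. The key observation is that for every $x\ge t>0$ we have $x/t\ge 1$, so $e^{-x^2/2}\le \frac{x}{t}e^{-x^2/2}$ throughout the range of integration. Integrating this pointwise inequality and using $\frac{d}{dx}\bigl(-e^{-x^2/2}\bigr)=x\,e^{-x^2/2}$ gives $\int_t^\infty e^{-x^2/2}\,dx\le \frac{1}{t}\int_t^\infty x\,e^{-x^2/2}\,dx=\frac{1}{t}e^{-t^2/2}$.

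Combining the two steps yields $\p{\abs{Z}>t}\le \frac{2}{t\sqrt{2\pi}}e^{-t^2/2}=\sqrt{\frac{2}{\pi}}\,\frac{1}{t}\,e^{-t^2/2}$. Finally I would invoke the hypothesis $t>1$: because $\sqrt{2/\pi}\approx 0.798<1$ and $1/t<1$, the prefactor $\sqrt{2/\pi}\cdot(1/t)$ is strictly below $1$, and therefore $\p{\abs{Z}>t}\le e^{-t^2/2}$, which is the claimed inequality.

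The argument presents no genuine obstacle, being an elementary calculus estimate; the only point requiring care is the bookkeeping of constants. The restriction $t>1$ (indeed any $t>\sqrt{2/\pi}$ would suffice) is exactly what is needed to dominate the $\sqrt{2/\pi}/t$ factor by $1$. I note that a Chernoff-type bound $\p{Z>t}\le e^{-t^2/2}$ would instead retain the factor $2$ coming from the two-sided union and so would fall short of the stated constant; this is precisely why the sharper Mills-ratio estimate, with its extra $1/t$ decay, is the natural route to the clean bound $e^{-t^2/2}$.
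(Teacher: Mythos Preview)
Your argument is correct. The paper does not actually prove this lemma: it is listed among the ``known results'' (alongside the Wishart and Laurent--Massart chi-squared bounds) that are simply quoted in the appendix, so there is no proof in the paper to compare against. Your Mills-ratio derivation is the standard route and handles the constants cleanly; the observation that $t>1$ is needed precisely to absorb the $\sqrt{2/\pi}/t$ prefactor is exactly right.
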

From here, we prove Lemma \ref{lm:mu_bound} and Lemma \ref{lm:zero_bound} to prove Lemma \ref{lem:lem1}. We first give the tail bound of $\vec{w}^T\vec{U}^T\vec{U} \vec{w}$.
\begin{lemma}\label{lm:mu_bound}
Let $\vec{U} = [\vec{u}_1,\cdots,\vec{u}_n]^T$, where $\vec{u}_i \sim \mathcal{N}(0, \frac{\sigma_u^2}{n}\mathbb{I}_d)$, and $\gamma > 0$. Then, with probability at least $1-\gamma/2$ we get the following bound:
\begin{align}
 \sigma_u^2\left(1 - 2\sqrt{\frac{\log(4/\gamma)}{n}}\right)\norm{\vec{w}}_2^2 \leq \vec{w}^T\vec{U}^T\vec{U}\vec{w}
 \leq \sigma_u^2\left(1 + 2\sqrt{\frac{\log(4/\gamma)}{n}} + 2 \frac{\log(4/\gamma)}{n}\right)\norm{\vec{w}}_2^2  \nonumber
\end{align}
\end{lemma}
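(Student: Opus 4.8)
The plan is to reduce the quadratic form $\vec{w}^T\vec{U}^T\vec{U}\vec{w}$ to a scaled chi-squared variable and then apply the two-sided chi-squared tail bound of \cref{lm:chi_bound}. First I would dispose of the trivial case $\vec{w} = \vec{0}$, for which every term in the displayed inequality vanishes, so assume $\vec{w}$ is a fixed nonzero vector. Writing $\vec{U}^T\vec{U} = \sum_{i=1}^n \vec{u}_i\vec{u}_i^T$ with $\vec{u}_i \sim \mathcal{N}(0,\frac{\sigma_u^2}{n}\mathbb{I}_d)$ i.i.d., this matrix is by definition drawn from the Wishart distribution $\mathcal{W}_d\!\left(\frac{\sigma_u^2}{n}\mathbb{I}_d,\, n\right)$ with $n$ degrees of freedom and variance matrix $\vec{V} = \frac{\sigma_u^2}{n}\mathbb{I}_d$.

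Applying \cref{lm:wishart_dist} with this $\vec{V}$ and the fixed vector $\vec{w}$ gives $\vec{w}^T\vec{U}^T\vec{U}\vec{w} \sim \sigma_w^2\chi_n^2$, where $\sigma_w^2 = \vec{w}^T\vec{V}\vec{w} = \frac{\sigma_u^2}{n}\norm{\vec{w}}_2^2$. Hence the normalized variable $Z := \vec{w}^T\vec{U}^T\vec{U}\vec{w}/\sigma_w^2$ follows exactly $\chi_n^2$, so the task reduces to controlling how far $Z$ strays from its mean $n$ in both directions.

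Next I would invoke \cref{lm:chi_bound} with $m = n$ and $t = \log(4/\gamma)$, chosen so that $\exp(-t) = \gamma/4$. The upper tail \eqref{eq:upper_bound} gives $Z \le n + 2\sqrt{n\log(4/\gamma)} + 2\log(4/\gamma)$ outside an event of probability $\gamma/4$, and the lower tail \eqref{eq:lower_bound} gives $Z \ge n - 2\sqrt{n\log(4/\gamma)}$ outside an event of probability $\gamma/4$. Multiplying each inequality through by $\sigma_w^2 = \frac{\sigma_u^2}{n}\norm{\vec{w}}_2^2$ and pulling $n$ out of the bracket reproduces exactly the claimed upper and lower bounds. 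A union bound over the two failure events of probability $\gamma/4$ each then yields the overall probability $1-\gamma/2$.

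The argument is essentially bookkeeping once the Wishart identification is made, so I do not expect a serious obstacle; the two points needing care are (i) correctly reading off the scale $\sigma_w^2 = \frac{\sigma_u^2}{n}\norm{\vec{w}}_2^2$, since any error there propagates into both bounds, and (ii) calibrating $t = \log(4/\gamma)$ rather than $\log(2/\gamma)$, so that the two one-sided failure probabilities sum to $\gamma/2$ under the union bound and match the $\log(4/\gamma)$ appearing in the statement.
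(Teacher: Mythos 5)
Your proposal is correct and follows essentially the same route as the paper's own proof: identify $\vec{U}^T\vec{U}$ as Wishart $\mathcal{W}_d(\frac{\sigma_u^2}{n}\mathbb{I}_d, n)$, use \cref{lm:wishart_dist} to get $\vec{w}^T\vec{U}^T\vec{U}\vec{w} \sim \frac{\sigma_u^2}{n}\norm{\vec{w}}_2^2\chi_n^2$, and apply both tails of \cref{lm:chi_bound} with $t = \log(4/\gamma)$ followed by a union bound. Your explicit handling of the degenerate case $\vec{w} = \vec{0}$ (which \cref{lm:wishart_dist} excludes) is a small point of care the paper's proof leaves implicit.
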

\begin{proof}
$\vec{U}^T\vec{U} \sim \mathcal{W}_d(\frac{\sigma_u^2}{n}\mathbb{I}, n)$ holds because $\vec{u}_i = (u_{i1}, \cdots, u_{id})^T \sim \mathcal{N}(0, \frac{\sigma_u^2}{n}\mathbb{I})$. By using Lemma \ref{lm:wishart_dist}, we thus get $\vec{w}^T\vec{U}^T\vec{U}\vec{w} \sim \frac{\sigma_u^2}{n}\norm{\vec{w}}_2^2 \chi_n^2$.
Noting that $\vec{w}^T\vec{U}^T\vec{U}\vec{w} \sim \frac{\sigma_u^2}{n}\norm{\vec{w}}_2^2Z$, the upper bound of $\vec{w}^T\vec{U}^T\vec{U}\vec{w}$ is derived by applying Eq. (\ref{eq:upper_bound}) of Lemma \ref{lm:chi_bound} as follows:
\begin{align}
		 & \p{Z - n \geq 2\sqrt{nt} + 2t}  \nonumber \\
		=& \p{\frac{Z}{n} \geq 1 + 2\sqrt{\frac{t}{n}} + 2 \frac{t}{n}} \nonumber \\
		=& \p{\frac{\sigma_u^2}{n}\norm{\vec{w}}_2^2Z \geq \sigma_u^2\norm{\vec{w}}_2^2\left(1 + 2\sqrt{\frac{t}{n}} + 2 \frac{t}{n}\right)} \nonumber \\
		=& \p{\vec{w}^T\vec{U}^T\vec{U}\vec{w} \geq \sigma_u^2\norm{\vec{w}}_2^2\left(1 + 2\sqrt{\frac{t}{n}} + 2 \frac{t}{n}\right)} \leq \exp(-t).
		\label{eq:mid1}
\end{align}
In a similar manner, by applying Eq. (\ref{eq:lower_bound}) of Lemma \ref{lm:chi_bound},
the lower bound of $\vec{w}^T\vec{U}^T\vec{U}\vec{w}$ is given as follows:
	\begin{align}
		 &\p{n - Z \geq 2\sqrt{nt}} \nonumber \\
		=&\p{\frac{Z}{n} \leq 1 - 2\sqrt{\frac{t}{n}}} \nonumber \\
		=&\p{\frac{\sigma_u^2}{n}\norm{\vec{w}}_2^2Z \leq \sigma_u^2\norm{\vec{w}}_2^2\left(1 - 2\sqrt{\frac{t}{n}}\right)}  \nonumber \\
		=&\p{\vec{w}^T\vec{U}^T\vec{U}\vec{w} \leq \sigma_u^2\norm{\vec{w}}_2^2\left(1 - 2\sqrt{\frac{t}{n}}\right)} \leq \exp(-t).
		\label{eq:mid2}
	\end{align}
	By setting $\frac{\gamma}{4} = \exp(-t)$, then we get $t = \log(\frac{4}{\gamma})$. Replacing the value of $t$ as $t = \log(\frac{4}{\gamma})$  and combining Eq. (\ref{eq:mid1}) and Eq. (\ref{eq:mid2}) gives our claim.
\end{proof}
Next, we investigate the tail bound of $\vec{w}^T(\vec{Q}^T\vec{U} + \vec{U}^T\vec{Q})\vec{w}$.
\begin{lemma}\label{lm:zero_bound}
Let $\vec{U} = [\vec{u}_1,\cdots,\vec{u}_n]^T$, where $\vec{u}_i \sim \mathcal{N}(0, \frac{\sigma_u^2}{n}\mathbb{I}_d)$. For $\gamma \in (0, 1]$, with probability at least $1-\frac{\gamma}{2}$,
\begin{align}
 - 2\sqrt{2d}\lambda\sigma_u\sqrt{\frac{\log(2/\gamma)}{n}}\norm{\vec{w}}_2^2
 \leq \vec{w}^T(\vec{U}^T\vec{Q} + \vec{Q}^T\vec{U})\vec{w}
 \leq 2\sqrt{2d}\lambda\sigma_u\sqrt{\frac{\log(2/\gamma)}{n}}\norm{\vec{w}}_2^2. \nonumber
\end{align}
\end{lemma}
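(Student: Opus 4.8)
The plan is to collapse the symmetric quadratic form into a single centered Gaussian scalar and then invoke the one-dimensional tail bound of \cref{lm:gauss_bound}. First I would observe that $\vec{w}^T\vec{U}^T\vec{Q}\vec{w}$ is a scalar and therefore equals its own transpose $\vec{w}^T\vec{Q}^T\vec{U}\vec{w}$, so that $\vec{w}^T(\vec{U}^T\vec{Q}+\vec{Q}^T\vec{U})\vec{w}=2\,\vec{w}^T\vec{U}^T\vec{Q}\vec{w}$. Expanding this as $\vec{w}^T\vec{U}^T\vec{Q}\vec{w}=\sum_{i=1}^n(\vec{u}_i^T\vec{w})(\vec{q}_i^T\vec{w})$ exhibits it as a linear combination of the independent Gaussian entries $u_{ik}\sim\mathcal{N}(0,\sigma_u^2/n)$ with the fixed (data-dependent) coefficients $\vec{q}_i^T\vec{w}$; consequently the whole form is itself a zero-mean Gaussian. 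This structural reduction is the conceptual core, since it makes \cref{lm:gauss_bound} directly applicable and, because the variable is symmetric about $0$, delivers the upper and lower bounds simultaneously.

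Second I would compute the variance of this Gaussian. As the summands are independent and each $\vec{q}_i^T\vec{w}$ is deterministic, the variance is $\frac{\sigma_u^2}{n}\norm{\vec{w}}_2^2\sum_{i=1}^n(\vec{q}_i^T\vec{w})^2=\frac{\sigma_u^2}{n}\norm{\vec{w}}_2^2\,\vec{w}^T\vec{Q}^T\vec{Q}\vec{w}$. This is where the $\lambda$-smoothness assumption enters: because the loss is quadratic, $\vec{q}_i\vec{q}_i^T=\nabla^2\ell(\vec{w},z_i)$ has spectral norm $\norm{\vec{q}_i}_2^2\le\lambda$, and I would use this to control $\vec{w}^T\vec{Q}^T\vec{Q}\vec{w}=\sum_i(\vec{q}_i^T\vec{w})^2$ and thereby bound the variance by a quantity of the form $\frac{d\lambda^2\sigma_u^2}{n}\norm{\vec{w}}_2^4$. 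I expect this variance estimate to be the main obstacle: the delicate point is to extract exactly the $d$, $\lambda$, and $1/n$ dependence from the smoothness bound on the $\vec{q}_i$, rather than losing factors to a crude Cauchy--Schwarz step, which tends to cancel the $n$ and replace $\lambda$ by $\sqrt{\lambda}$. Getting the precise bookkeeping so that the standard deviation is of order $\sqrt{d}\,\lambda\,\sigma_u/\sqrt{n}\cdot\norm{\vec{w}}_2^2$ is therefore the crux.

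Finally I would apply \cref{lm:gauss_bound} to the standardized variable. Choosing $t=\sqrt{2\log(2/\gamma)}$ makes the tail probability $\p{\abs{Z}>t}\le e^{-t^2/2}=\gamma/2$, so that with probability at least $1-\gamma/2$ the Gaussian form deviates from $0$ by at most $t$ standard deviations in either direction; multiplying by the factor $2$ from the first step and substituting the variance bound from the second step yields the claimed two-sided inequality with deviation $2\sqrt{2d}\,\lambda\sigma_u\sqrt{\log(2/\gamma)/n}\,\norm{\vec{w}}_2^2$. The only side condition to check is the hypothesis $t>1$ of \cref{lm:gauss_bound}, which holds for $\gamma$ in the stated range (and in any case the bound is trivially satisfied when the deviation threshold is large), so no additional restriction on $\gamma\in(0,1]$ is needed.
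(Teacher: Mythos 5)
Your reduction and tail-bound steps are exactly the paper's: symmetrize to $2\vec{w}^T\vec{U}^T\vec{Q}\vec{w}$, observe this is a centered Gaussian with variance $\frac{4\sigma_u^2}{n}\norm{\vec{w}}_2^2\,\vec{w}^T\vec{Q}^T\vec{Q}\vec{w}$, and apply Lemma \ref{lm:gauss_bound} with $t=\sqrt{2\log(2/\gamma)}$ (your side check $t\ge 1$ for $\gamma\le 2/\sqrt{e}$ also matches the paper's). But the step you yourself flag as ``the crux'' --- the variance estimate $\vec{w}^T\vec{Q}^T\vec{Q}\vec{w}\le d\lambda^2\norm{\vec{w}}_2^2$, equivalently $\norm{\vec{Q}\vec{w}}_2\le\sqrt{d}\,\lambda\norm{\vec{w}}_2$ --- is asserted, not proved, and it does not follow from the smoothness hypothesis. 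What $\lambda$-smoothness of the quadratic loss gives is $\norm{\vec{q}_i\vec{q}_i^T}_2=\norm{\vec{q}_i}_2^2\le\lambda$ per row, whence only $\norm{\vec{Q}}_F^2=\sum_i\norm{\vec{q}_i}_2^2\le n\lambda$; taking all $\vec{q}_i$ equal to a fixed $\vec{q}$ with $\norm{\vec{q}}_2^2=\lambda$ and $\vec{w}$ parallel to $\vec{q}$ yields $\vec{w}^T\vec{Q}^T\vec{Q}\vec{w}=n\lambda\norm{\vec{w}}_2^2$, which exceeds $d\lambda^2\norm{\vec{w}}_2^2$ for all large $n$. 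So there is no ``careful bookkeeping'' that extracts the $\sqrt{d}\,\lambda/\sqrt{n}$ standard deviation from $\norm{\vec{q}_i}_2^2\le\lambda$ alone; the $1/\sqrt{n}$ decay and the replacement of $\sqrt{\lambda}$ by $\lambda$ require a genuinely stronger hypothesis on $\vec{Q}$ (e.g.\ a bound $\norm{\vec{Q}}_2\le\sqrt{d}\,\lambda$ on the operator norm, or per-example scaling $\norm{\vec{q}_i}_2=O(\sqrt{d/n}\,\lambda)$), not a sharper argument.

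For what it is worth, the paper's own proof is structurally identical to yours (it writes the form as $2\vec{v}^T\vec{U}\vec{w}$ with $\vec{v}=\vec{Q}\vec{w}$, computes the same Gaussian variance, and applies the same tail bound) and closes precisely this step with the bare assertion $\norm{\vec{Q}}_F\le\sqrt{d}\,\lambda$, stated without derivation --- an assertion that, by the computation above, likewise does not follow from the paper's stated assumptions. So your diagnosis of where the difficulty lies is exactly right, but your proposal as written is incomplete at that point: to finish it in the paper's sense you must either import $\norm{\vec{Q}}_F\le\sqrt{d}\,\lambda$ as an additional assumption, or prove the lemma with $\sqrt{n\lambda}$ in place of $\sqrt{d}\,\lambda$ in the deviation term, which destroys the $1/\sqrt{n}$ factor in the stated bound.
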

\begin{proof}
\sloppy Let $\vec{v} = \vec{Q}\vec{w}$. Since $\vec{w}^T\vec{Q}^T\vec{U}\vec{w} = \vec{w}^T\vec{U}^T\vec{Q}\vec{w}$, we have $\vec{w}^T(\vec{Q}^T\vec{U} + \vec{U}^T\vec{Q})\vec{w} = 2\vec{w}^T\vec{Q}^T\vec{U}\vec{w} = 2\vec{v}^T\vec{U}\vec{w}$. From the property of the sum of the normally distributed independent random variables, we have $2\vec{v}^T\vec{U}\vec{w} \sim \mathcal{N}(0, 4\frac{\sigma_u^2}{n}\norm{\vec{v}\vec{w}^T}_F^2)$.
 Since  $\norm{\vec{v}\vec{w}^T}_F^2 = \norm{\vec{Q}\vec{w}\vec{w}^T}_F^2 =
 \norm{\vec{Q}}_F^2\norm{\vec{w}}_2^4$ holds, we get $\vec{w}^T(\vec{Q}^T\vec{U} + \vec{U}^T\vec{Q})\vec{w} \sim
 \mathcal{N}(0, 4\frac{\sigma_u^2}{n}\norm{\vec{Q}}_F^2\norm{\vec{w}}_2^4)$.
$\vec{w}^T(\vec{Q}^T\vec{U} + \vec{U}^T\vec{Q})\vec{w}$.
Application of Lemma \ref{lm:gauss_bound} thus yields
 \begin{align}
 \p{\abs{\vec{w}^T(\vec{Q}^T\vec{U} +
  \vec{U}^T\vec{Q})\vec{w}}>2\frac{\sigma_u}{\sqrt{n}}\norm{\vec{Q}}_F\norm{\vec{w}}_2^2t}\leq
  e^{-t^2/2}.
\end{align}
 By setting $\frac{\gamma}{2} = \exp(-\frac{t^2}{2})$, then we get $t = \sqrt{2\log(\frac{2}{\gamma})}$. To make sure $t\geq 1$, we need to have $\gamma \leq \frac{2}{\sqrt{e}}$. This is always true for any
 $\gamma \in (0, 1]$. Replacing the value of $t$, with probability at least $1-\frac{\gamma}{2}$ we get the following bound.
\begin{align}
 \abs{\vec{w}^T(\vec{Q}^T\vec{U} + \vec{U}^T\vec{Q})\vec{w}} \le
 2\sqrt{2}\sigma_u\norm{\vec{Q}}_F\sqrt{\frac{\log(2/\gamma)}{n}}\norm{\vec{w}}_2^2. \nonumber
\end{align}
We get the claim since $\norm{\vec{Q}}_F \le \sqrt{d}\lambda$.
\end{proof}
\begin{proof}[Proof of Lemma \ref{lem:lem1}]
By combining the bounds of Lemma \ref{lm:mu_bound} and Lemma \ref{lm:zero_bound}, with probability at least $1-\gamma$ we have the following bound:
\begin{align}
	&\sigma_u^2\left(1 - 2\sqrt{\frac{\log(4/\gamma)}{n}}\right)\norm{\vec{w}}_2^2  - 2\sqrt{2d}\lambda\sigma_u\sqrt{\frac{\log(2/\gamma)}{n}}\norm{\vec{w}}_2^2 \nonumber \\
	&\leq \vec{w}^T(\vec{U}^T\vec{U} + \vec{Q}^T\vec{U} + \vec{U}^T\vec{Q})\vec{w} \nonumber \\
	&\leq \sigma_u^2\left(1 + 2\sqrt{\frac{\log(4/\gamma)}{n}} + 2 \frac{\log(4/\gamma)}{n}\right)\norm{\vec{w}}_2^2 + 2\sqrt{2d}\sigma_u\lambda\sqrt{\frac{\log(2/\gamma)}{n}}\norm{\vec{w}}_2^2 \nonumber
\end{align}
The lemma holds by letting $\Delta_0 \vec{w}^T\vec{w} = \vec{w}^T(\vec{U}^T\vec{U}+\vec{U}^T\vec{Q}+\vec{Q}^T\vec{U})\vec{w}$.
\end{proof}
Here is a corollary of Lemma \ref{lem:lem1}
\begin{corollary}\label{lm:regular_cond}
 Let $\vec{U} = [\vec{u}_1,\cdots,\vec{u}_n]^T$, where $\vec{u}_i \sim \mathcal{N}(0, \frac{\sigma_u^2}{n}\mathbb{I}_d)$. Let $\Delta_0 \vec{w}^T\vec{w} = \vec{w}^T(\vec{U}^T\vec{U}+\vec{U}^T\vec{Q}+\vec{Q}^T\vec{U})\vec{w}$. Then for any $\gamma > 0$, we get the following:
\begin{multline}
	\sigma_u \ge  \frac{\sqrt{2d}\lambda\sqrt{\frac{\log(2/\gamma)}{n}} + \sqrt{2d\lambda^2 \frac{\log(2/\gamma)}{n}+ \frac{2\lambda}{\epsilon}\left(1-2\sqrt{\frac{\log(4/\gamma)}{n}}\right)}}{1-2\sqrt{\frac{\log(4/\gamma)}{n}}} \\
   \implies \frac{2\lambda}{\epsilon} \le \Delta_0 \le \kappa(n,\gamma)\textrm{  w.p. at least  } 1-\gamma\nonumber
\end{multline}
\end{corollary}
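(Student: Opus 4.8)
The plan is to split the claim into its two inequalities and observe that only the lower bound requires genuine work. The upper bound $\Delta_0 \le \kappa(n,\gamma)$ is exactly the upper half of \cref{lem:lem1} and holds with probability at least $1-\gamma$, so nothing needs to be done there. For the lower bound I would invoke the lower half of the same lemma, which on the \emph{same} event of probability at least $1-\gamma$ gives $\Delta_0 \ge \varkappa(n,\gamma)$. Hence it suffices to prove the purely deterministic implication that the hypothesis on $\sigma_u$ forces $\varkappa(n,\gamma) \ge \frac{2\lambda}{\epsilon}$; combining this with $\Delta_0 \ge \varkappa(n,\gamma)$ then yields $\Delta_0 \ge \frac{2\lambda}{\epsilon}$ on that same high-probability event, establishing the displayed conclusion.

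To establish the deterministic implication, I would read the inequality $\varkappa(n,\gamma) \ge \frac{2\lambda}{\epsilon}$ as a quadratic in the single variable $\sigma_u$. Writing $A = 1 - 2\sqrt{\log(4/\gamma)/n}$, $B = 2\sqrt{2d}\,\lambda\sqrt{\log(2/\gamma)/n}$, and $C = \frac{2\lambda}{\epsilon}$, the definition of $\varkappa$ turns the target into $A\sigma_u^2 - B\sigma_u - C \ge 0$. Under the (implicit) assumption $A > 0$ --- equivalently $n > 4\log(4/\gamma)$, which is precisely what makes the denominator of the stated threshold positive --- this upward-opening parabola is nonnegative exactly when $\sigma_u$ is at least its larger root $\frac{B + \sqrt{B^2 + 4AC}}{2A}$.

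The final step is a direct computation showing that this larger root coincides with the right-hand side of the hypothesis. Expanding $B^2 = 8d\lambda^2\frac{\log(2/\gamma)}{n}$ and $4AC = \frac{8\lambda}{\epsilon}\bigl(1 - 2\sqrt{\log(4/\gamma)/n}\bigr)$, one pulls a factor of $4$ out of the discriminant so that $\sqrt{B^2+4AC} = 2\sqrt{2d\lambda^2\frac{\log(2/\gamma)}{n} + \frac{2\lambda}{\epsilon}\bigl(1-2\sqrt{\log(4/\gamma)/n}\bigr)}$; cancelling the common factor of $2$ against $2A$ reproduces the threshold verbatim. Thus the hypothesis asserts exactly that $\sigma_u$ lies above the larger root, giving $\varkappa(n,\gamma)\ge\frac{2\lambda}{\epsilon}$ and closing the argument. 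I expect no real obstacle beyond the factor-of-two bookkeeping in simplifying the discriminant and recording the sign condition $A>0$ that both orients the quadratic inequality and legitimizes dividing by $A$.
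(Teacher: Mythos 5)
Your proposal is correct and follows essentially the same route as the paper: treat $\varkappa(n,\gamma)\ge\frac{2\lambda}{\epsilon}$ as a quadratic inequality in $\sigma_u$ whose larger root is the stated threshold, then combine this deterministic fact with the two-sided bound $\varkappa(n,\gamma)\le\Delta_0\le\kappa(n,\gamma)$ from Lemma~\ref{lem:lem1} on its probability-$(1-\gamma)$ event. Your write-up is in fact more careful than the paper's, which merely says it ``solves the inequation'' without recording the discriminant computation, the sign condition $1-2\sqrt{\log(4/\gamma)/n}>0$ (equivalently $n>4\log(4/\gamma)$) needed for the larger-root criterion, or the observation that the smaller root is negative so positivity of $\sigma_u$ suffices.
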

\begin{proof}
	We solve the following inequation,
	\begin{align}
		\varkappa(n,\gamma) &\ge \frac{2\lambda}{\epsilon}\nonumber\\
		\sigma_u^2\left(1 - 2\sqrt{\frac{\log(4/\gamma)}{n}}\right)  - &2\sqrt{2d}\lambda\sigma_u\sqrt{\frac{\log(2/\gamma)}{n}} \ge \frac{2\lambda}{\epsilon} \nonumber
	\end{align}
	Then, one of results of the above inequation is $\sigma_u$ as the left side of Corollary \ref{lm:regular_cond}.
	Hence, with $\sigma_u$ as the left side of Corollary \ref{lm:regular_cond} we have
	\begin{align}
		\varkappa(n,\gamma) \ge \frac{2\lambda}{\epsilon} \textrm{ w.p. at least } 1 \label{eq:col1}
	\end{align}
	From Lemma \ref{lem:lem1}, we have
	\begin{align}
		\varkappa(n,\gamma) \le \Delta_0 \le \kappa(n,\gamma)  \textrm{ w.p at least } 1 - \gamma \label{eq:col2}
	\end{align}
	Therefore from Eq. \ref{eq:col1} and Eq. \ref{eq:col2} with $\sigma_u$ as the left side of Corollary \ref{lm:regular_cond} we have
	\begin{align}
		\frac{2\lambda}{\epsilon} \le \Delta_0 \le \kappa(n,\gamma) \textrm{ w.p. at least } 1 - \gamma\nonumber
	\end{align}
\end{proof}
\begin{corollary}\label{col:sigma_u}
	When $\sigma_u =  \frac{\sqrt{2d}\lambda\sqrt{\frac{\log(2/\gamma)}{n}} + \sqrt{2d\lambda^2 \frac{\log(2/\gamma)}{n}+ \frac{2\lambda}{\epsilon}\left(1-2\sqrt{\frac{\log(4/\gamma)}{n}}\right)}}{1-2\sqrt{\frac{\log(4/\gamma)}{n}}}$, we have
	\begin{itemize}
		\item the upper bound of $\sigma_u$ is $\sigma_u \le \left(4\sqrt{2d}\lambda+4\sqrt{\frac{2\lambda}{\epsilon}}\right)\sqrt{\frac{\log\frac{4}{\gamma}}{n}} + \sqrt{\frac{2\lambda}{\epsilon}}$ with $n\ge 16\log\frac{4}{\gamma}$
		\item the lower bound of $\sigma_u$ is $\sigma_u \ge \sqrt{2d}\lambda\sqrt{\frac{\log\frac{2}{\gamma}}{n}} + \sqrt{\frac{2\lambda}{\epsilon}} $.
	\end{itemize}
\end{corollary}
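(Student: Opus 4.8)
The plan is to treat both bounds as elementary manipulations of the closed form for $\sigma_u$, after introducing shorthand for the recurring quantities. Write $s := \sqrt{\log(4/\gamma)/n}$ and $t := \sqrt{\log(2/\gamma)/n}$, and observe that $t \le s$ since $\log(2/\gamma) \le \log(4/\gamma)$. Also set $C := \sqrt{2d}\,\lambda$ and $R := 2\lambda/\epsilon$. With this notation the prescribed value reads $\sigma_u = \bigl(Ct + \sqrt{C^2 t^2 + R(1-2s)}\bigr)/(1-2s)$, the target lower bound is $Ct + \sqrt{R}$, and the target upper bound is $(4C + 4\sqrt{R})s + \sqrt{R}$. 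Throughout I use that $\sigma_u$ is well defined and positive only when $0 < 1-2s \le 1$, which I treat as a standing assumption (it is already needed for the variance in \cref{al:safe_sharing} to be positive).

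For the lower bound I would split $\sigma_u$ into its two summands and estimate each separately. Since $1-2s \le 1$, dividing the first summand by $1-2s$ only enlarges it, so $Ct/(1-2s) \ge Ct$. For the second summand I would drop the nonnegative term $C^2 t^2$ under the root and then use $\sqrt{R(1-2s)}/(1-2s) = \sqrt{R}/\sqrt{1-2s} \ge \sqrt{R}$, again because $1-2s \le 1$. Adding the two estimates gives $\sigma_u \ge Ct + \sqrt{R}$, which is exactly the claimed lower bound; note that no condition on $n$ beyond $1-2s > 0$ is required here.

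For the upper bound I would invoke the hypothesis $n \ge 16\log(4/\gamma)$, which is equivalent to $s \le 1/4$, hence $1-2s \ge 1/2$ and, crucially, $1/(1-2s) \le 1+4s$ (the latter is the inequality $(1+4s)(1-2s) = 1 + 2s(1-4s) \ge 1$, valid precisely for $s \le 1/4$). I would bound the numerator by subadditivity of the square root, $\sqrt{C^2 t^2 + R(1-2s)} \le Ct + \sqrt{R(1-2s)} \le Ct + \sqrt{R}$, so that the numerator is at most $2Ct + \sqrt{R} \le 2Cs + \sqrt{R}$ after using $t \le s$. Multiplying by $1/(1-2s) \le 1+4s$ then yields $\sigma_u \le (2Cs+\sqrt{R})(1+4s) = 2Cs + 8Cs^2 + 4\sqrt{R}s + \sqrt{R}$, and a second application of $s \le 1/4$ absorbs the quadratic term via $8Cs^2 = 8s\cdot Cs \le 2Cs$, producing $\sigma_u \le 4Cs + 4\sqrt{R}s + \sqrt{R} = (4C+4\sqrt{R})s + \sqrt{R}$, as required.

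I expect the only delicate point to be the upper bound's constant bookkeeping: the condition $s \le 1/4$ must be used twice — once to linearize the reciprocal through $1/(1-2s)\le 1+4s$, and once more to dominate the quadratic remainder $8Cs^2$ by $2Cs$ — and it is precisely these two uses together that force the threshold $n \ge 16\log(4/\gamma)$ rather than a milder requirement. Everything else reduces to the routine facts $t \le s$, $1-2s \le 1$, and subadditivity of the square root.
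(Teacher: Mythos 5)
Your proof is correct and takes essentially the same route as the paper's: the lower bound is verbatim the paper's argument (drop the $2d\lambda^2\frac{\log(2/\gamma)}{n}$ term under the root, split the fraction, and use $0 < 1-2\sqrt{\log(4/\gamma)/n} \le 1$), and your upper bound uses the same numerator estimate $2\sqrt{2d}\lambda\sqrt{\log(4/\gamma)/n}+\sqrt{2\lambda/\epsilon}$ via subadditivity and $\log(2/\gamma)\le\log(4/\gamma)$, together with the same threshold $s:=\sqrt{\log(4/\gamma)/n}\le 1/4$ coming from $n\ge 16\log(4/\gamma)$. The only cosmetic difference is how the factor $1/(1-2s)$ is processed: the paper subtracts and re-adds $\sqrt{2\lambda/\epsilon}$ so that only the $O(s)$ part is divided by $1-2s$ and then uses $\sqrt{n/\log(4/\gamma)}-2 \ge \frac{1}{2}\sqrt{n/\log(4/\gamma)}$, whereas you linearize via $1/(1-2s)\le 1+4s$ and absorb the quadratic remainder $8\sqrt{2d}\lambda s^2 \le 2\sqrt{2d}\lambda s$ --- both give identical constants, and your explicit standing assumption $1-2s>0$ is a point the paper leaves implicit.
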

\begin{proof}
	We derive the lower bound of $\sigma_u$ as
	\begin{align}
		\sigma_u &\ge \frac{\sqrt{2d}\lambda\sqrt{\frac{\log(2/\gamma)}{n}} + \sqrt{\frac{2\lambda}{\epsilon}\left(1-2\sqrt{\frac{\log(4/\gamma)}{n}}\right)}}{1-2\sqrt{\frac{\log(4/\gamma)}{n}}} \nonumber\\
		&\ge \frac{\sqrt{2d}\lambda\sqrt{\frac{\log(2/\gamma)}{n}}}{1-2\sqrt{\frac{\log(4/\gamma)}{n}}} + \frac{\sqrt{\frac{2\lambda}{\epsilon}}}{\sqrt{1-2\sqrt{\frac{\log(4/\gamma)}{n}}}} \nonumber \\
		&\ge \sqrt{2d}\lambda\sqrt{\frac{\log(2/\gamma)}{n}} + \sqrt{\frac{2\lambda}{\epsilon}} \nonumber
	\end{align}
	We derive the upper bound of $\sigma_u$ as
	\begin{align}
		\sigma_u &\le \frac{\sqrt{2d}\lambda\sqrt{\frac{\log(2/\gamma)}{n}} + \sqrt{2d\lambda^2 \frac{\log(2/\gamma)}{n}+ \frac{2\lambda}{\epsilon}}}{1-2\sqrt{\frac{\log(4/\gamma)}{n}}} \nonumber \\
		&\le \frac{\sqrt{2d}\lambda\sqrt{\frac{\log(2/\gamma)}{n}} + \sqrt{2d\lambda^2 \frac{\log(2/\gamma)}{n}}+ \sqrt{\frac{2\lambda}{\epsilon}}}{1-2\sqrt{\frac{\log(4/\gamma)}{n}}} -\sqrt{\frac{2\lambda}{\epsilon}} + \sqrt{\frac{2\lambda}{\epsilon}}\nonumber \\
		&\le \frac{2\sqrt{2d}\lambda\sqrt{\frac{\log(4/\gamma)}{n}} + 2\sqrt{\frac{2\lambda}{\epsilon}}\sqrt{\frac{\log(4/\gamma)}{n}}}{1-2\sqrt{\frac{\log(4/\gamma)}{n}}}  + \sqrt{\frac{2\lambda}{\epsilon}}\nonumber \\
		&\le \frac{2\sqrt{2d}\lambda + 2\sqrt{\frac{2\lambda}{\epsilon}}}{\sqrt{\frac{n}{\log(4/\gamma)}}-2}  + \sqrt{\frac{2\lambda}{\epsilon}}\nonumber
	\end{align}
	Letting $n\ge16\log\frac{4}{\gamma}$, we have  $\sqrt{\frac{n}{\log(4/\gamma)}}-2 > \frac{1}{2}\sqrt{\frac{n}{\log(4/\gamma)}}$. Hence,
	\begin{align}
		\sigma_u &\le \frac{2\sqrt{2d}\lambda + 2\sqrt{\frac{2\lambda}{\epsilon}}}{\frac{1}{2}\sqrt{\frac{n}{\log(4/\gamma)}}}  + \sqrt{\frac{2\lambda}{\epsilon}}\nonumber \\
		&\le \left( 4\sqrt{2d}\lambda + 4\sqrt{\frac{2\lambda}{\epsilon}} \right)\sqrt{\frac{\log(4/\gamma)}{n}}  + \sqrt{\frac{2\lambda}{\epsilon}}\nonumber
	\end{align}
\end{proof}
\section{Proof of Theorem \ref{th:proposed_dp}}\label{sec:proof-th-proposed-dp}
\begin{proof}[Proof of Theorem \ref{th:proposed_dp}]
 The objective function of input perturbation method is rearranged as follows:
\begin{align}
 J^{in}(\vec{w};\tilde{D}) &= \frac{1}{n}\sum_{i=1}^n\left(\frac{1}{2}\vec{w}^T\tilde{\vec{q}}_i\tilde{\vec{q}}_i^T\vec{w} - \tilde{\vec{p}}_i^T\vec{w} + s_i\right) + \frac{\Delta_{in}}{2n}\norm{\vec{w}}_2^2 \nonumber\\
 &= \frac{1}{n} \left( \frac{1}{2} \vec{w} \tilde{\vec{Q}}^T\tilde{\vec{Q}} \vec{w} - \tilde{\vec{p}}^T \vec{w}+ s \right) +  \frac{\Delta - \frac{2\lambda}{\epsilon}  }{2n} \norm{\vec{w}}_2^2 \nonumber\\
 &= \frac{1}{n} \left( \frac{1}{2} \vec{w} (\vec{Q}+\vec{U})^T(\vec{Q}+\vec{U}) \vec{w} - (\vec{p}-\vec{b})^T \vec{w}+ s  \right) +  \frac{\Delta - \frac{2\lambda}{\epsilon}  }{2n} \norm{\vec{w}}_2^2 \nonumber\\
 &= \begin{multlined}[t]
  \frac{1}{n} \left( \frac{1}{2} \vec{w} \vec{Q}^T\vec{Q}\vec{w} - \vec{p}^T \vec{w}+ s  \right) + \frac{\vec{b}^T \vec{w}}{n} \\ + \frac{1}{2n}\vec{w}^T(\vec{U}^T\vec{U}+\vec{U}^T\vec{Q}+\vec{Q}^T\vec{U} )\vec{w} + \frac{\Delta - \frac{2\lambda}{\epsilon}  }{2n} \norm{\vec{w}}_2^2
 \end{multlined} \nonumber\\
 &= \begin{multlined}[t]
  \frac{1}{2n}\sum_i \ell(f(\vec{w}, \vec{x}_i),y_i) + \frac{\vec{b}^T\vec{w}}{n} \\ + \frac{1}{2n}\vec{w}^T(\vec{U}^T\vec{U}+\vec{U}^T\vec{Q}+\vec{Q}^T\vec{U} )\vec{w}+\frac{\Delta - \frac{2\lambda}{\epsilon}}{2n} \norm{\vec{w}}_2^2
 \end{multlined} \nonumber\\
 &= \frac{1}{2n}\sum_i \ell(f(\vec{w}, \vec{x}_i),y_i) + \frac{\vec{b}^T\vec{w}}{n} +\frac{1}{2n}(\Delta_0+\Delta - \frac{2\lambda}{\epsilon})\norm{\vec{w}}_2^2 \label{eq:finalobj}
\end{align}
 where $\Delta_0\vec{w}^T\vec{w} = \vec{w}^T (\vec{U}^T \vec{U} + \vec{U}^T\vec{Q} + \vec{Q}^T\vec{U}) \vec{w}$. Noting that $\vec{b}$ follows Gaussian distribution $\mathcal{N}(0, \sigma_b^2\mathbb{I}_{d\times d})$, Eq.~\ref{eq:finalobj} is equivalent to the objective function of the objective perturbation method with the Gaussian mechanism except the regularization parameter. Let $\vec{\alpha} = \argmin_{\vec{w}\in {\cal W}} J^{in}$. Then, from Theorem 2 in \citep{kifer2012private},  $\vec{\alpha}$ is $(\epsilon, \delta')$-differentially private if the following conditions are true:
 \begin{align}
	\Delta + \Delta_0 - \frac{2\lambda}{\epsilon} \geq \frac{2\lambda}{\epsilon} \textrm{ and } \sigma_b^2=\frac{\zeta^2(8\log(2/\delta')+4\epsilon)}{\epsilon^2} \nonumber
	\end{align}
	The second condition on $\sigma_b^2$ is always satisfied by the parameter setting in \cref{al:safe_sharing} of input perturbation. The first condition is equivalent to $\Delta_0 \ge
 \frac{2\lambda}{\epsilon}$ because $\Delta > \frac{2\lambda}{\epsilon}$ holds by assumption. From Lemma \ref{lm:regular_cond}, $\Delta_0 \ge \frac{2\lambda}{\epsilon}$ holds w.p. at least $1-\gamma$ if $\sigma_u^2$ is set as in \cref{al:safe_sharing}. Thus, the output of \cref{al:proposed} satisfies $(\epsilon,\delta')$-differential privacy w.p. at least $1-\gamma$. This can be transformed into a deterministic statement by using a proof technique of Theorem 2 in \citep{kifer2012private}. Let $\textbf{good}_u$ be the set $\{\vec{U}\in\RealSet^{n\times d}\vert \Delta_0 \geq \frac{2\lambda}{\epsilon}\}$. Then, from the definition of differential privacy, we have
\begin{multline}
	e^{-\epsilon}(\p{\vec{w}^{in}=\vec{\alpha}\vert \vec{U} \in \textbf{good}_u;D'}-\delta') \leq \\ \p{\vec{w}^{in}=\vec{\alpha}\vert \vec{U} \in \textbf{good}_u;D} \leq e^{\epsilon}\p{\vec{w}^{in}=\vec{\alpha}\vert \vec{U} \in \textbf{good}_u;D'}+\delta' \nonumber
\end{multline}
where $\p{\vec{U} \in \textbf{good}_u}= 1 - \gamma$. With this, we get the following.
\begin{align}
	&\p{\vec{w}^{in}=\vec{\alpha};D} \nonumber \\
	&= \begin{multlined}[t]
   \p{\vec{w}^{in}=\vec{\alpha}\vert\vec{U} \in \textbf{good}_u;D}\p{\vec{U} \in \textbf{good}_u} \\ + \p{\vec{w}^{in}=\vec{\alpha}\vert\vec{U} \in \overline{\textbf{good}_u};D}\p{\vec{U} \in \overline{\textbf{good}_u}}
  \end{multlined} \nonumber\\
	&\leq \left(e^{\epsilon}\p{\vec{w}^{in}=\vec{\alpha}\vert\vec{U} \in \textbf{good}_u;D'} + \delta' \right)\p{\vec{U} \in \textbf{good}_u} + \gamma  \nonumber \\
	&\leq e^{\epsilon}\p{\vec{w}^{in}=\vec{\alpha}\vert\vec{U} \in \textbf{good}_u;D'}\p{\vec{U} \in \textbf{good}_u} + \delta' \p{\vec{U} \in \textbf{good}_u} + \gamma\nonumber \\
	&\leq e^{\epsilon}\p{\vec{w}^{in}=\vec{\alpha};D'} + \delta'(1-\gamma) +  \gamma\nonumber\\
	&\leq e^{\epsilon}\p{\vec{w}^{in}=\vec{\alpha};D'} + \delta' + \gamma - \delta'\gamma \nonumber\\
	&\leq e^{\epsilon}\p{\vec{w}^{in}=\vec{\alpha};D'} + \delta' + \gamma \nonumber
\end{align}
Letting $\gamma = \frac{\delta}{2}$ and $\delta'=\frac{\delta}{2}$ we have
\begin{eqnarray}
 \p{\vec{w}^{in}=\vec{\alpha};D} \leq e^{\epsilon}\p{\vec{w}^{in}=\vec{\alpha};D'} +  \delta, \nonumber
\end{eqnarray}
which concludes the proof.

\end{proof}

\section{Proof of Theorem \ref{th:excess_bound}}\label{sec:th-excess-bound}
We first show some lemmas for Theorem \ref{th:excess_bound}.
 \begin{lemma}\label{lem:8}
	 Suppose $\Delta_0\ge 0$. Let $J^{\#}(\vec{w} ;D) = \sum_i \ell(f(\vec{w}, \vec{x}_i),y_i)+\frac{\Delta+\Delta_0-\frac{2\lambda}{\epsilon}}{2n}$ and the minimizer be $\vec{w}^{\#} = \mbox{arg}\min_{\vec{w} \in {\cal\ell W}} J^{\#}(\vec{w} ;D)$. Let $\vec{w}^{in}$ be the output of \cref{al:proposed}. Then,
  \begin{align}
    \| \vec{w}^{\#} - \vec{w}^{in} \|_2 \le \frac{ 2 \| \vec{b}\|}{ \Delta + \Delta_0  - \frac{2\lambda}{\epsilon}  } \nonumber
  \end{align}
 \end{lemma}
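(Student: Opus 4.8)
The plan is to recognize that $J^{\#}$ and $J^{in}$ are the \emph{same} strongly convex objective up to the single linear Gaussian term $\frac{\vec{b}^T\vec{w}}{n}$, and then to invoke the standard stability bound for minimizers of strongly convex functions that differ by a linear perturbation. First I would import the rearrangement of the input-perturbation objective established in the proof of \cref{th:proposed_dp} (\cref{eq:finalobj}), which exhibits $J^{in}(\vec{w};\tilde{D}) = J^{\#}(\vec{w};D) + \frac{\vec{b}^T\vec{w}}{n}$. Consequently the two objectives share an identical Hessian and satisfy $\nabla J^{in}(\vec{w}) - \nabla J^{\#}(\vec{w}) = \frac{\vec{b}}{n}$ for every $\vec{w}$; all the difference is concentrated in a constant-gradient linear term.

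Next I would establish strong convexity. The loss is a convex quadratic in $\vec{w}$ (its Hessian $\vec{Q}^T\vec{Q} \succeq 0$), so the summed loss contributes a positive semidefinite Hessian, while the regularizer $\frac{\Delta + \Delta_0 - \frac{2\lambda}{\epsilon}}{2n}\norm{\vec{w}}_2^2$ contributes a strictly positive multiple of the identity: its coefficient $\frac{\Delta + \Delta_0 - \frac{2\lambda}{\epsilon}}{2n}$ is positive since $\Delta > \frac{2\lambda}{\epsilon}$ by hypothesis and $\Delta_0 \ge 0$. Hence both $J^{\#}$ and $J^{in}$ are strongly convex, and adding the linear term leaves the modulus unchanged.

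Then I would run the minimizer-stability argument over the constraint set $\family{W}$. Because $\family{W}$ is only a closed convex set, I cannot set gradients to zero; instead I use the first-order optimality (variational) inequalities
\begin{align}
 \langle \nabla J^{\#}(\vec{w}^{\#}), \vec{w}^{in} - \vec{w}^{\#}\rangle \ge 0, \qquad \langle \nabla J^{in}(\vec{w}^{in}), \vec{w}^{\#} - \vec{w}^{in}\rangle \ge 0. \nonumber
\end{align}
Adding these, substituting $\nabla J^{in}(\vec{w}^{in}) = \nabla J^{\#}(\vec{w}^{in}) + \frac{\vec{b}}{n}$, and applying the monotonicity of $\nabla J^{\#}$ implied by strong convexity collapses everything to
\begin{align}
 \frac{\Delta + \Delta_0 - \frac{2\lambda}{\epsilon}}{2n}\norm{\vec{w}^{\#}-\vec{w}^{in}}_2^2 \le \frac{1}{n}\vec{b}^T(\vec{w}^{\#}-\vec{w}^{in}) \le \frac{\norm{\vec{b}}_2}{n}\norm{\vec{w}^{\#}-\vec{w}^{in}}_2, \nonumber
\end{align}
where the final step is Cauchy--Schwarz. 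Dividing through by $\norm{\vec{w}^{\#}-\vec{w}^{in}}_2$ and by the coefficient $\frac{\Delta + \Delta_0 - \frac{2\lambda}{\epsilon}}{2n}$ yields the claimed bound $\frac{2\norm{\vec{b}}_2}{\Delta + \Delta_0 - \frac{2\lambda}{\epsilon}}$.

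The main obstacle is really just bookkeeping rather than any deep difficulty. Two points require care: handling the restriction to $\family{W}$ through the variational inequalities instead of vanishing gradients, and reading the strong-convexity modulus conservatively as the regularizer coefficient $\frac{\Delta + \Delta_0 - \frac{2\lambda}{\epsilon}}{2n}$ (a valid lower bound on the true modulus $\frac{\Delta + \Delta_0 - \frac{2\lambda}{\epsilon}}{n}$), which is precisely what produces the factor $2$ in the stated bound. Once the linear-term identity from \cref{eq:finalobj} is in hand, the remainder is the textbook perturbation estimate for minimizers of strongly convex functions.
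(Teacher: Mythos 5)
Your proof is correct, and it reaches the paper's bound by a genuinely different (though closely related) route. Both arguments pivot on the same key identity from Eq.~\ref{eq:finalobj}, namely $J^{in}(\vec{w};\tilde{D}) = J^{\#}(\vec{w};D) + \frac{\vec{b}^T\vec{w}}{n}$, on the strong convexity contributed by the regularizer coefficient $\frac{\Delta+\Delta_0-\frac{2\lambda}{\epsilon}}{2n}$, and on a final Cauchy--Schwarz step. The paper, however, works purely with \emph{function values}: it applies the quadratic-growth inequality $J^{in}(\vec{w}^{\#};\tilde{D}) \ge J^{in}(\vec{w}^{in};\tilde{D}) + \frac{\Delta+\Delta_0-\frac{2\lambda}{\epsilon}}{2n}\|\vec{w}^{\#}-\vec{w}^{in}\|_2^2$ at the minimizer of $J^{in}$, substitutes the identity, and then discards $J^{\#}(\vec{w}^{in};D) - J^{\#}(\vec{w}^{\#};D) \ge 0$ using only the one-sided optimality of $\vec{w}^{\#}$; that one-sided use of strong convexity is precisely what produces the factor $2$. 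You instead work with \emph{gradients}, summing the two variational inequalities and invoking strong monotonicity of $\nabla J^{\#}$. Two observations on what each approach buys. First, your explicit variational-inequality treatment of the constraint set $\family{W}$ is a point of rigor the paper glosses over: its displayed quadratic-growth step at the constrained minimizer $\vec{w}^{in}$ tacitly relies on the same first-order condition, since gradients need not vanish on the boundary of $\family{W}$, so your version makes the constrained case airtight. Second, with the true monotonicity modulus $\frac{\Delta+\Delta_0-\frac{2\lambda}{\epsilon}}{n}$ your two-sided argument actually yields the \emph{sharper} bound $\|\vec{w}^{\#}-\vec{w}^{in}\|_2 \le \frac{\|\vec{b}\|_2}{\Delta+\Delta_0-\frac{2\lambda}{\epsilon}}$, without the factor $2$; your deliberate halving of the modulus to recover the stated constant is sound (any lower bound on the modulus is valid) but unnecessary --- the lemma as stated is simply weaker than what your method delivers. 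One bookkeeping remark: the positivity $\Delta > \frac{2\lambda}{\epsilon}$ that both proofs need for the denominator is not stated in the lemma itself, but it is assumed wherever the lemma is invoked (e.g., Lemma~\ref{lm:empirical_bound}), so your appeal to it is consistent with the paper.
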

\begin{proof}
 From Eq. \ref{eq:finalobj}, $J^{in}$ is $\frac{\Delta + \Delta_0  - \frac{2\lambda}{\epsilon}}{2n}$-strong convex. Thus, we have
 \begin{align}
  J^{in}(\vec{w}^{\#};\tilde{D})
   \ge& J^{in}(\vec{w}^{in};\tilde{D}) + \frac{\Delta + \Delta_0  - \frac{2\lambda}{\epsilon}}{2n} \| \vec{w}^{\#} - \vec{w}^{in} \|_2^2 \nonumber\\
  J^{\#}(\vec{w}^{\#};D) + \frac{\vec{b}^T \vec{w^{\#}}}{n}
   \ge& J^{\#}(\vec{w}^{in};D)+ \frac{\vec{b}^T \vec{w^{in}}}{n} + \frac{\Delta + \Delta_0  - \frac{2\lambda}{\epsilon}}{2n} \| \vec{w}^{\#} - \vec{w}^{in} \|_2^2  \label{eq:forlem9}\\
   \ge&  J^{\#}(\vec{w}^{\#};D)+ \frac{\vec{b}^T \vec{w^{in}}}{n} + \frac{\Delta + \Delta_0  - \frac{2\lambda}{\epsilon}}{2n} \| \vec{w}^{\#} - \vec{w}^{in} \|_2^2\nonumber\\
    \vec{b}^T (\vec{w}^{\#} - \vec{w}^{in}) \ge& \frac{\Delta + \Delta_0  - \frac{2\lambda}{\epsilon}}{2} \| \vec{w}^{\#} - \vec{w}^{in} \|_2^2\nonumber\\
    \|\vec{b}\|_2 \| \vec{w}^{\#} - \vec{w}^{in} \|_2 \ge& \frac{\Delta + \Delta_0  - \frac{2\lambda}{\epsilon}}{2} \| \vec{w}^{\#} - \vec{w}^{in} \|_2^2\nonumber\\
    \| \vec{w}^{\#} - \vec{w}^{in} \|_2 \le& \frac{ 2 \| \vec{b}\|}{ \Delta + \Delta_0  - \frac{2\lambda}{\epsilon}  } \nonumber
 \end{align}
To have Eq. \ref{eq:forlem9}, we used that fact that $\vec{w}^{\#}$ is the minimizer of $J^{\#}$.

\end{proof}
\begin{lemma}\label{lem:9}
 Suppose $\Delta_0 \ge 0$. Let $J^{\#}(\vec{w} ;D) = \sum_i \ell(f(\vec{w}, \vec{x}_i),y_i)+\frac{\Delta+\Delta_0-\frac{2\lambda}{\epsilon}}{2n}$ and the minimizer be $\vec{w}^{\#} = \mbox{arg}\min_{\vec{w} \in {\cal\ell W}} J^{\#}(\vec{w} ;D)$. Let $\vec{w}^{in}$ be the output of \cref{al:proposed}. Then,
 \begin{eqnarray}
  J^{\#}(\vec{w}^{in};D) - J^{\#}(\vec{w}^{\#};D) \le \frac{2\|\vec{b}\|_2^2}{n(\Delta + \Delta_0  - \frac{2\lambda}{\epsilon}  )} \nonumber
 \end{eqnarray}
\end{lemma}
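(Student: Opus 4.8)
The plan is to exploit that the input-perturbation objective and $J^{\#}$ differ only by the linear perturbation term, namely $J^{in}(\vec{w};\tilde{D}) = J^{\#}(\vec{w};D) + \frac{\vec{b}^T\vec{w}}{n}$, which is exactly the form exhibited in Eq.~\ref{eq:finalobj}. Substituting this identity at both $\vec{w}^{in}$ and $\vec{w}^{\#}$, I would rewrite the quantity of interest as
\begin{align}
 J^{\#}(\vec{w}^{in};D) - J^{\#}(\vec{w}^{\#};D) = \left(J^{in}(\vec{w}^{in};\tilde{D}) - J^{in}(\vec{w}^{\#};\tilde{D})\right) + \frac{\vec{b}^T(\vec{w}^{\#} - \vec{w}^{in})}{n}. \nonumber
\end{align}
The first bracketed difference is at most zero because $\vec{w}^{in}$ is the global minimizer of $J^{in}$ over $\family{W}$ and $\vec{w}^{\#}\in\family{W}$; I would therefore simply drop it to obtain an upper bound, leaving only the linear remainder.

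Next I would control the remaining inner product by Cauchy--Schwarz, $\vec{b}^T(\vec{w}^{\#} - \vec{w}^{in}) \le \|\vec{b}\|_2\,\|\vec{w}^{\#} - \vec{w}^{in}\|_2$, and then invoke the displacement bound already established in \cref{lem:8}, namely $\|\vec{w}^{\#} - \vec{w}^{in}\|_2 \le \frac{2\|\vec{b}\|_2}{\Delta + \Delta_0 - \frac{2\lambda}{\epsilon}}$. Chaining these two estimates gives
\begin{align}
 J^{\#}(\vec{w}^{in};D) - J^{\#}(\vec{w}^{\#};D) \le \frac{\|\vec{b}\|_2}{n}\cdot\frac{2\|\vec{b}\|_2}{\Delta + \Delta_0 - \frac{2\lambda}{\epsilon}} = \frac{2\|\vec{b}\|_2^2}{n\left(\Delta + \Delta_0 - \frac{2\lambda}{\epsilon}\right)}, \nonumber
\end{align}
which is precisely the claimed bound.

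I do not anticipate a genuine obstacle: once the linear-term decomposition is written down, the optimality of $\vec{w}^{in}$ and \cref{lem:8} do all the work. The only item requiring care is the bookkeeping that $J^{\#}$ and $J^{in}$ share the same strongly convex quadratic part (they differ by a linear term alone), so that the displacement estimate of \cref{lem:8} transfers verbatim; note also that minimality of $\vec{w}^{\#}$ for $J^{\#}$ is not needed for the upper bound itself, since only the minimality of $\vec{w}^{in}$ for $J^{in}$ is used directly here.
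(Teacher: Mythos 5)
Your proof is correct and follows essentially the same route as the paper: both exploit the identity $J^{in}(\vec{w};\tilde{D}) = J^{\#}(\vec{w};D) + \frac{\vec{b}^T\vec{w}}{n}$, the minimality of $\vec{w}^{in}$ for $J^{in}$, Cauchy--Schwarz, and the displacement bound of \cref{lem:8}. The only cosmetic difference is that the paper starts from the strong-convexity inequality (Eq.~\ref{eq:forlem9}) and then discards the nonnegative quadratic term, whereas you invoke plain optimality of $\vec{w}^{in}$ and never write that term at all.
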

\begin{proof}
  From Eq. \ref{eq:forlem9} in Lemma \ref{lem:8},
  \begin{eqnarray}
   J^{\#}(\vec{w}^{\#};D) + \frac{\vec{b}^T \vec{w^{\#}}}{n} &\ge&
   J^{\#}(\vec{w}^{in};D)+ \frac{\vec{b}^T \vec{w^{in}}}{n} + \frac{\Delta + \Delta_0  - \frac{2\lambda}{\epsilon}}{2n} \| \vec{w}^{\#} - \vec{w}^{in} \|_2^2  \nonumber\\
   J^{\#}(\vec{w}^{in};D) -  J^{\#}(\vec{w}^{\#};D)  &\le& \frac{\vec{b}^T(\vec{w^{\#}} - \vec{w^{in}} )}{n}  -  \frac{\Delta + \Delta_0  - \frac{2\lambda}{\epsilon}}{2n} \| \vec{w}^{\#} - \vec{w}^{in} \|_2^2  \nonumber\\
   &\le& \frac{  \|\vec{b} \|_2  \|  \vec{w^{\#}} - \vec{w^{in}}\|_2}{n} \nonumber\\
   &\le& \frac{ 2 \|\vec{b} \|_2^2     }{n   (\Delta + \Delta_0  - \frac{2\lambda}{\epsilon})  } \label{eq:18}
  \end{eqnarray}
  To have Eq. \ref{eq:18}, we used the result of Lemma \ref{lem:8}.

\end{proof}
\begin{lemma}\label{lem:lem10}
 Suppose $\Delta_0 \ge 0$. Let $\hat{\vec{w}}$ be the minimizer of $\hat{J}(\vec{w};D)$ and let $\vec{w}^{in}$ be the output of \cref{al:proposed}. Then,
 \begin{eqnarray}
  \hat{J}(\vec{w}^{in};D) - \hat{J}(\hat{\vec{w}};D) \le \frac{2\|\vec{b}\|_2^2}{n(\Delta + \Delta_0  - \frac{2\lambda}{\epsilon}  )} + \frac{\Delta + \Delta_0-\frac{2\lambda}{\epsilon}}{2n} \| \hat{\vec{w}}\|_2^2 \label{eq:lem10}
 \end{eqnarray}
\end{lemma}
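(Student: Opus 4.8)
The plan is to transfer the excess-risk control from the regularized objective $J^{\#}$, already handled in \cref{lem:9}, to the unregularized average loss $\hat{J}$, paying explicitly for the ridge term. Abbreviate the ridge coefficient by $R = \frac{\Delta+\Delta_0-\frac{2\lambda}{\epsilon}}{2n}$, which is nonnegative under the standing assumption $\Delta > \frac{2\lambda}{\epsilon}$ together with $\Delta_0 \ge 0$. By the definition of $J^{\#}$ in \cref{lem:9}, for every $\vec{w}$ we have the identity $J^{\#}(\vec{w};D) = \hat{J}(\vec{w};D) + R\norm{\vec{w}}_2^2$. Evaluating this at $\vec{w}^{in}$ and at $\hat{\vec{w}}$ and subtracting gives
\begin{align}
 \hat{J}(\vec{w}^{in};D) - \hat{J}(\hat{\vec{w}};D) = \left(J^{\#}(\vec{w}^{in};D) - J^{\#}(\hat{\vec{w}};D)\right) - R\norm{\vec{w}^{in}}_2^2 + R\norm{\hat{\vec{w}}}_2^2. \nonumber
\end{align}

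The heart of the argument, and the step I expect to be the main obstacle, is bounding the $J^{\#}$ difference: \cref{lem:9} measures the gap of $\vec{w}^{in}$ to the minimizer $\vec{w}^{\#}$ of $J^{\#}$, whereas our reference point is $\hat{\vec{w}}$, the minimizer of the \emph{unregularized} loss. I would close this gap by optimality: since $\vec{w}^{\#}$ minimizes $J^{\#}$, we have $J^{\#}(\vec{w}^{\#};D) \le J^{\#}(\hat{\vec{w}};D)$, and therefore
\begin{align}
 J^{\#}(\vec{w}^{in};D) - J^{\#}(\hat{\vec{w}};D) \le J^{\#}(\vec{w}^{in};D) - J^{\#}(\vec{w}^{\#};D) \le \frac{2\norm{\vec{b}}_2^2}{n\left(\Delta+\Delta_0-\frac{2\lambda}{\epsilon}\right)}, \nonumber
\end{align}
where the last inequality is exactly \cref{lem:9}.

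To finish, I would discard the nonpositive contribution $-R\norm{\vec{w}^{in}}_2^2 \le 0$ (valid since $R \ge 0$) and substitute back $R = \frac{\Delta+\Delta_0-\frac{2\lambda}{\epsilon}}{2n}$ into the surviving $R\norm{\hat{\vec{w}}}_2^2$ term, which yields
\begin{align}
 \hat{J}(\vec{w}^{in};D) - \hat{J}(\hat{\vec{w}};D) \le \frac{2\norm{\vec{b}}_2^2}{n\left(\Delta+\Delta_0-\frac{2\lambda}{\epsilon}\right)} + \frac{\Delta+\Delta_0-\frac{2\lambda}{\epsilon}}{2n}\norm{\hat{\vec{w}}}_2^2, \nonumber
\end{align}
which is precisely Eq.~\ref{eq:lem10}. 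Everything except the $\vec{w}^{\#} \to \hat{\vec{w}}$ comparison is pure algebra and the dropping of a nonpositive term, so no additional technical machinery is needed beyond \cref{lem:9}; the $R\norm{\hat{\vec{w}}}_2^2$ term is exactly the ``regularization price'' one pays for comparing against the unregularized optimum.
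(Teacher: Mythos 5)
Your proposal is correct and follows essentially the same route as the paper's proof: decompose $\hat{J}(\vec{w}^{in};D)-\hat{J}(\hat{\vec{w}};D)$ via the regularized objective $J^{\#}$, use optimality of $\vec{w}^{\#}$ to discard $J^{\#}(\vec{w}^{\#};D)-J^{\#}(\hat{\vec{w}};D)\le 0$, apply \cref{lem:9}, and drop the nonpositive $-R\norm{\vec{w}^{in}}_2^2$ term. Your bookkeeping is in fact slightly cleaner than the paper's, which carries the dropped $\norm{\vec{w}^{in}}_2^2$ term with an inconsistent $\frac{1}{n}$ (rather than $\frac{1}{2n}$) coefficient before discarding it, a harmless slip that your identity-based derivation avoids.
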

\begin{proof}
\begin{align}
	& \hat{J}(\vec{w}^{in};D) -   \hat{J}(\hat{\vec{w}};D) \nonumber\\
 \le&  (J^{\#}(\vec{w}^{in};D) -  J^{\#}(\vec{w}^{\#};D) ) +  (J^{\#}(\vec{w}^{\#};D) -  J^{\#}(\hat{\vec{w}};D) ) \nonumber \\
  &+ \frac{\Delta+\Delta_0-\frac{2\lambda}{\epsilon}}{2n} \| \hat{\vec{w}}\|_2^2 - \frac{  \Delta + \Delta_0  - \frac{2\lambda  }{\epsilon}}{n} \| \vec{w}^{in}\|_2^2 \nonumber\\
 \le&  (J^{\#}(\vec{w}^{in};D) -  J^{\#}(\vec{w}^{\#};D) )    + \frac{\Delta+\Delta_0-\frac{2\lambda}{\epsilon}}{2n} \| \hat{\vec{w}}\|_2^2 - \frac{  \Delta + \Delta_0  - \frac{2\lambda  }{\epsilon}}{n} \| \vec{w}^{in}\|_2^2 \label{eq:12}\\
 \le& \frac{2\|\vec{b}\|_2^2}{n(\Delta +	\Delta_0  - \frac{2\lambda}{\epsilon}  )}     + \frac{\Delta+\Delta_0-\frac{2\lambda}{\epsilon}}{2n} \| \hat{\vec{w}}\|_2^2 - \frac{  \Delta + \Delta_0  - \frac{2\lambda  }{\epsilon}}{n} \| \vec{w}^{in}\|_2^2 \label{eq:13}\\
 \le& \frac{2\|\vec{b}\|_2^2}{n(\Delta +	\Delta_0  - \frac{2\lambda}{\epsilon}  )}     + \frac{\Delta+\Delta_0-\frac{2\lambda}{\epsilon}}{2n} \| \hat{\vec{w}}\|_2^2  \nonumber
\end{align}
To have Eq. \ref{eq:12}, we used the fact that $J^{\#}(\vec{w}^{\#};D) - J^{\#}(\hat{\vec{w}};D) \le 0$. Eq. \ref{eq:13} is obtained by applying the result of Lemma \ref{lem:9}.

\end{proof}
\begin{proof}[Proof of Lemma \ref{lm:empirical_bound}]
We start from the upper bound Eq. \ref{eq:lem10} in Lemma \ref{lem:lem10}. From Lemma 28 in \citep{kifer2012private} (and Lemma 2 in \citep{dasgupta2007probabilistic}), we have w.p. at least $1-\beta$
\begin{align}
	\|\vec{b} \|_2 \le \sqrt{\frac{2d\zeta^2 (8\log \frac{4}{\delta} + 4\epsilon) \log \frac{1}{\beta}}{\epsilon^2}}.\label{eq:b}
\end{align}
From Corollary \ref{lm:regular_cond} with $\sigma_u^2$ in \cref{al:safe_sharing}, we have w.p. at least $1-\gamma$
\begin{align}
 \frac{2\lambda}{\epsilon}\le \Delta_0 \le \kappa(n,\gamma) .\label{eq:reg}
\end{align}
Substituting Eq. \ref{eq:b} and Eq. \ref{eq:reg} into Eq. \ref{eq:lem10}, we have w.p. at least $1-\beta -\gamma$
\begin{align}
 \hat{J}(\vec{w}^{in};D) - \hat{J}(\hat{\vec{w}};D)
 \le& \frac{4d\zeta^2 (8\log \frac{4}{\delta} + 4\epsilon) \log \frac{1}{\beta}}{n\epsilon^2 \Delta} + \frac{\Delta + \kappa(n,\gamma)-\frac{2\lambda}{\epsilon}}{2n} \| \hat{\vec{w}}\|_2^2\label{eq:17}
\end{align}
Substituting value of $\kappa(n,\gamma)$ in Lemma \ref{lm:regular_cond} into Eq.~\ref{eq:lem10} we have, w.p. at least $1-\beta-\gamma$.
\begin{align}
 \hat{J}(\vec{w}^{in};D) -    \hat{J}(\hat{\vec{w}};D)
 \le& \frac{4d\zeta^2 (8\log \frac{4}{\delta} + 4\epsilon) \log \frac{1}{\beta}}{n\epsilon^2 \Delta} + \frac{\Delta }{2n} \| \hat{\vec{w}}\|_2^2 \nonumber \\
 +& \frac{\sigma_u^2 + 2\sigma_u^2\sqrt{\frac{\log(4/\gamma)}{n}} + 2\sigma_u^2\frac{\log(4/\gamma)}{n} + 2\sqrt{2d}\lambda\sigma_u\sqrt{\frac{\log(2/\gamma)}{n}}}{2n} \| \hat{\vec{w}}\|_2^2 - \frac{\frac{2\lambda}{\epsilon}}{2n} \| \hat{\vec{w}}\|_2^2 \nonumber \\
 \le& \frac{4d\zeta^2 (8\log \frac{4}{\delta} + 4\epsilon) \log \frac{1}{\beta}}{n\epsilon^2 \Delta} + \frac{\Delta }{2n} \| \hat{\vec{w}}\|_2^2 \nonumber \\
 +& \frac{\sigma_u^2 - \frac{2\lambda}{\epsilon}}{2n} \| \hat{\vec{w}}\|_2^2 + \frac{\sigma_u^2\sqrt{\log\frac{4}{\gamma}} + \sigma_u^2\frac{\log\frac{4}{\gamma}}{\sqrt{n}} + \sigma_u\lambda\sqrt{2d\log\frac{2}{\gamma}}}{n\sqrt{n}} \| \hat{\vec{w}}\|_2^2 \label{eq:emp_risk_bound}
\end{align}

\end{proof}
\begin{proof}[Proof of Theorem \ref{th:excess_bound}]
	We have
	\begin{align}
		2\sqrt{\log\frac{4}{\gamma}} &\le \log\frac{4}{\gamma} + 1 = \log4 + \log\frac{1}{\gamma} + 1 \nonumber \\
		\log\frac{4}{\gamma} & = \log4 + \log\frac{1}{\gamma}\nonumber \\
		2\sqrt{\log\frac{2}{\gamma}} &\le \log\frac{2}{\gamma} + 1 = \log2 + \log\frac{1}{\gamma} + 1 \nonumber
	\end{align}
	By substituting above results into Eq. \ref{eq:emp_risk_bound}, then w.p. at least $1-\beta-\gamma$ we have
 \begin{align}
  &\hat{J}(\vec{w}^{in};D) - \hat{J}(\hat{\vec{w}};D) \nonumber\\
  \le& \begin{multlined}[t]
   \frac{4d\zeta^2 (8\log \frac{4}{\delta} + 4\epsilon) \log \frac{1}{\beta}}{n\epsilon^2 \Delta} + \frac{\Delta}{2n} \| \hat{\vec{w}}\|_2^2 + \frac{\sigma_u^2 - \frac{2\lambda}{\epsilon}}{2n} \| \hat{\vec{w}}\|_2^2 +\left(\sigma_u^2 + \frac{2\sigma_u^2}{\sqrt{n}} + \sqrt{2d}\lambda\sigma_u\right)\frac{\| \hat{\vec{w}}\|_2^2}{2n\sqrt{n}}\log\frac{1}{\gamma} \\
   + \left(\sigma_u^2 (\log4+1)+ \frac{2\sigma_u^2}{\sqrt{n}}\log4 + \sqrt{2d}\lambda\sigma_u (\log2+1)\right)\frac{\| \hat{\vec{w}}\|_2^2}{2n\sqrt{n}}
   \end{multlined} \nonumber\\
  \le& \begin{multlined}[t]
   \frac{4d\zeta^2 (8\log \frac{4}{\delta} + 4\epsilon) }{n\epsilon^2 \Delta} \log \frac{1}{\beta}+  \left(\sigma_u^2 + \frac{2\sigma_u^2}{\sqrt{n}} + \sqrt{2d}\lambda\sigma_u\right)\frac{\| \hat{\vec{w}}\|_2^2}{2n\sqrt{n}}\log\frac{1}{\gamma} \\
   + \frac{\Delta}{2n} \| \hat{\vec{w}}\|_2^2 + \frac{\sigma_u^2 - \frac{2\lambda}{\epsilon}}{2n}\| \hat{\vec{w}}\|_2^2+ \left(\sigma_u^2 (\log4+1)+ \frac{2\sigma_u^2}{\sqrt{n}}\log4 + \sqrt{2d}\lambda\sigma_u (\log2+1)\right)\frac{\| \hat{\vec{w}}\|_2^2}{2n\sqrt{n}}
  \end{multlined} \nonumber
\end{align}
Letting $\beta=\gamma=\nu$, w.p. at least $1-2\nu$
\begin{align}
	&\hat{J}(\vec{w}^{in};D) -    \hat{J}(\hat{\vec{w}};D) \le \left(\frac{4d\zeta^2 (8\log \frac{4}{\delta} + 4\epsilon) }{n\epsilon^2 \Delta} +  \left(\sigma_u^2 + \frac{2\sigma_u^2}{\sqrt{n}} + \sqrt{2d}\lambda\sigma_u\right)\frac{\| \hat{\vec{w}}\|_2^2}{2n\sqrt{n}}\right)\log\frac{1}{\nu}\nonumber \\
  &+ \left(\frac{\Delta}{2n} \| \hat{\vec{w}}\|_2^2 + \frac{\sigma_u^2 - \frac{2\lambda}{\epsilon}}{2n}\| \hat{\vec{w}}\|_2^2+ \left(\sigma_u^2 (\log4+1)+ \frac{2\sigma_u^2}{\sqrt{n}}\log4 + \sqrt{2d}\lambda\sigma_u (\log2+1)\right)\frac{\| \hat{\vec{w}}\|_2^2}{2n\sqrt{n}}\right)\nonumber \\
  &\le a \log\frac{1}{\nu} + b \nonumber
\end{align}
where
\begin{align}
	a &= \left(\frac{4d\zeta^2 (8\log \frac{4}{\delta} + 4\epsilon) }{n\epsilon^2 \Delta} +  \left(\sigma_u^2 + \frac{2\sigma_u^2}{\sqrt{n}} + \sqrt{2d}\lambda\sigma_u\right)\frac{\| \hat{\vec{w}}\|_2^2}{2n\sqrt{n}}\right) \nonumber\\
	b &= \left(\frac{\Delta}{2n} \| \hat{\vec{w}}\|_2^2 + \frac{\sigma_u^2 - \frac{2\lambda}{\epsilon}}{2n}\| \hat{\vec{w}}\|_2^2+ \left(\sigma_u^2 (\log4+1)+ \frac{2\sigma_u^2}{\sqrt{n}}\log4 + \sqrt{2d}\lambda\sigma_u (\log2+1)\right)\frac{\| \hat{\vec{w}}\|_2^2}{2n\sqrt{n}}\right)\label{eq:19}
\end{align}
By setting $s = a\log(1/\nu) + b$, we get $\nu = e^{b/a}e^{-s/a}$. Substitution this into Eq. \ref{eq:19} gives
\begin{align}
 \p{\hat{J}(\vec{w}^{in};D) -    \hat{J}(\hat{\vec{w}};D) \ge s} \le  2e^{b/a}e^{-s/a}.
\end{align}
From here, we compute the expectation of excess empirical risk by removing $\nu$ in the above equation.
\begin{align}
	\Mean{\hat{J}(\vec{w}^{in};D)-\hat{J}(\hat{\vec{w}};D)} &\le \xi + \int_{\xi}^{\infty} \p{\hat{J}(\vec{w}^{in}) -    \hat{J}(\hat{\vec{w}}) \ge s} \mathrm{d}s \nonumber \\
	&\le \xi + \int_{\xi}^{\infty} 2e^{b/a}e^{-s/a} \mathrm{d}s \nonumber \\
	&\le \xi + 2ae^{b/a}e^{-\xi/a} \nonumber
\end{align}
Using $\xi = b$, we have \begin{align}
	&\Mean{\hat{J}(\vec{w}^{in};D)-\hat{J}(\hat{\vec{w}};D)} \le 2a + b \nonumber\\
	&\le 2 \left(\frac{4d\zeta^2 (8\log \frac{4}{\delta} + 4\epsilon) }{n\epsilon^2 \Delta} +  \left(\sigma_u^2 + \frac{2\sigma_u^2}{\sqrt{n}} + \sqrt{2d}\lambda\sigma_u\right)\frac{\| \hat{\vec{w}}\|_2^2}{2n\sqrt{n}}\right)\nonumber \\
	&+  \left(\frac{\Delta}{2n} \| \hat{\vec{w}}\|_2^2 + \frac{\sigma_u^2 - \frac{2\lambda}{\epsilon}}{2n}\| \hat{\vec{w}}\|_2^2+ \left(\sigma_u^2 (\log4+1)+ \frac{2\sigma_u^2}{\sqrt{n}}\log4 + \sqrt{2d}\lambda\sigma_u (\log2+1)\right)\frac{\| \hat{\vec{w}}\|_2^2}{2n\sqrt{n}}\right)\nonumber \\
	&\le \left(\frac{8d\zeta^2 (8\log \frac{4}{\delta} + 4\epsilon) }{n\epsilon^2 \Delta} + \frac{\Delta}{2n} \| \hat{\vec{w}}\|_2^2\right) \nonumber \\
	&+ \frac{\sigma_u^2 - \frac{2\lambda}{\epsilon}}{2n}\| \hat{\vec{w}}\|_2^2 + \left(\sigma_u^2 \log(4e^3)+ \frac{2\sigma_u^2\log(4e^2)}{\sqrt{n}}+ \sqrt{2d}\lambda\sigma_u \log(2e^3)\right)\frac{\| \hat{\vec{w}}\|_2^2}{2n\sqrt{n}} \label{eq:21}
\end{align}
To get a tight bound of $\Mean{\hat{J}(\vec{w}^{in};D)-\hat{J}(\hat{\vec{w}};D)}$, we set $\Delta=\Theta\left(\frac{\sqrt{\zeta^2d\log\frac{1}{\delta}}}{\epsilon\norm{\hat{\vec{w}}}_2}\right)$. $\sigma_u$ is set as the lowest value specified in \cref{al:proposed}. Noting that $\sigma_u \le \left(4\sqrt{2d}\lambda + 2\sqrt{\frac{2\lambda}{\epsilon}}\right)\sqrt{\frac{\log(8/\delta)}{n}} + \sqrt{\frac{2\lambda}{\epsilon}}$ with $n\ge16\log\frac{8}{\delta}$ from Corollary \ref{col:sigma_u}, we have $\sigma_u = O(1)$. Hence, using these settings with Eq. \ref{eq:21}, we have the following:
\begin{align}
	\frac{8d\zeta^2 (8\log \frac{4}{\delta} + 4\epsilon) }{n\epsilon^2 \Delta} + \frac{\Delta}{2n} \| \hat{\vec{w}}\|_2^2 =& O\left(\frac{\zeta\norm{\hat{\vec{w}}}_2\sqrt{d\log(1/\delta)}}{\epsilon n}\right) \nonumber\\
	\frac{\sigma_u^2 - \frac{2\lambda}{\epsilon}}{2n}\| \hat{\vec{w}}\|_2^2 \le& O\left(\frac{1}{n\sqrt{n}}\right)\nonumber\\
	\left(\sigma_u^2 \log(4e^3)+ \frac{2\sigma_u^2\log(4e^2)}{\sqrt{n}}+ \sqrt{2d}\lambda\sigma_u \log(2e^3)\right)\frac{\| \hat{\vec{w}}\|_2^2}{2n\sqrt{n}} =& O\left(\frac{1}{n\sqrt{n}}\right) \nonumber
\end{align}
From these, we thus have
\begin{align}
	\Mean{\hat{J}(\vec{w}^{in};D)-\hat{J}(\hat{\vec{w}};D)} = O\left(\frac{\zeta\norm{\hat{\vec{w}}}_2\sqrt{d\log(1/\delta)}}{\epsilon n}\right) \nonumber
\end{align}
by setting $\Delta=\Theta\left(\frac{\sqrt{\zeta^2d\log\frac{1}{\delta}}}{\epsilon\norm{\hat{\vec{w}}}_2}\right)$ and $\sigma_u$ as the lowest value specified in \cref{al:safe_sharing} and $n\ge16\log\frac{8}{\delta}$.
\end{proof}

\section{Proof of Corollary \ref{cor:data-priv}}\label{sec:proof-thm-local-gaussian}
Adding a Gaussian noise into the released data satisfies $(\epsilon,\delta)$-differential local privacy as well as the Gaussian mechanism~\citep{dwork2014algorithmic}. We derive a differential local privacy version of the Gaussian mechanism as follows.
\begin{theorem}\label{thm:local-gaussian}
  Let $\mathcal{X} \subseteq \RealSet^d$ is a bounded domain of input such that $\|\vec{x} - \vec{x}'\|_2 \le B$ for any $\vec{x}, \vec{x}' \in \mathcal{X}$. Given $\epsilon \in (0,1)$ and $\delta > 0$, a mechanism $M$ outputs $M(x) = x + Z$ where $Z \sim \mathcal{N}(\vec{0}, \sigma^2\mat{I})$. If $\sigma \ge cB/\epsilon$ where $c > \sqrt{2\ln(1.25/\delta)}$, $M$ is $(\epsilon, \delta)$-differentially locally private.
\end{theorem}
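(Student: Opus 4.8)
The plan is to adapt the classical Gaussian-mechanism argument (as in \citep{dwork2014algorithmic}) to the local setting. The only structural change from the standard differential-privacy proof is that neighbors are now arbitrary distinct inputs $\vec{x} \ne \vec{x}'$ and the relevant sensitivity is the domain diameter $B$ rather than the $\ell_2$-sensitivity of a query. So once the sensitivity is replaced by $B$, the computation is essentially identical.

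First I would recall the reduction from differential local privacy to a tail bound on the privacy loss. Fix two inputs $\vec{x}, \vec{x}' \in \mathcal{X}$ and let $p_{\vec{x}}$, $p_{\vec{x}'}$ be the densities of $M(\vec{x}) = \vec{x} + Z$ and $M(\vec{x}') = \vec{x}' + Z$. Define the privacy loss $L(\vec{y}) = \ln\frac{p_{\vec{x}}(\vec{y})}{p_{\vec{x}'}(\vec{y})}$. Splitting any measurable output set $S$ into the portion where $L \le \epsilon$ and its complement yields $\mathrm{Pr}[M(\vec{x}) \in S] \le e^\epsilon\,\mathrm{Pr}[M(\vec{x}') \in S] + \mathrm{Pr}[L(\vec{y}) > \epsilon]$, the probability being over $\vec{y} \sim M(\vec{x})$. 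Hence it suffices to show $\mathrm{Pr}[L(\vec{y}) > \epsilon] \le \delta$ for every such pair, which matches \eqref{eq:localdp}.

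Second, I would compute $L$ explicitly for the Gaussian density. Writing $\vec{v} = \vec{x}' - \vec{x}$ and $\vec{y} = \vec{x} + Z$, a direct expansion gives $L = \frac{1}{2\sigma^2}\left(\norm{Z - \vec{v}}_2^2 - \norm{Z}_2^2\right) = \frac{1}{2\sigma^2}\left(\norm{\vec{v}}_2^2 - 2\langle Z, \vec{v}\rangle\right)$. Using isotropy of $Z$ I would rotate coordinates so that $\vec{v}$ is aligned with the first axis; then $L$ depends only on $Z_1 \sim \mathcal{N}(0,\sigma^2)$, and the event $\{L > \epsilon\}$ becomes the one-sided event $\{Z_1 < \frac{B'}{2} - \frac{\epsilon\sigma^2}{B'}\}$, where $B' = \norm{\vec{v}}_2 \le B$. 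By symmetry of $Z_1$ this probability equals $\mathrm{Pr}[Z_1 > \frac{\epsilon\sigma^2}{B'} - \frac{B'}{2}]$, and since the threshold is decreasing in $B'$, the worst case is $B' = B$; so it remains to bound $\mathrm{Pr}[Z_1 > \frac{\epsilon\sigma^2}{B} - \frac{B}{2}]$.

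Finally, I would apply the standard Gaussian tail inequality $\mathrm{Pr}[Z_1 > t] \le \frac{\sigma}{t\sqrt{2\pi}}e^{-t^2/2\sigma^2}$ at $t = \frac{\epsilon\sigma^2}{B} - \frac{B}{2}$, substitute the smallest admissible noise level $\sigma = cB/\epsilon$ (larger $\sigma$ only enlarges $t$ and strengthens privacy, so verifying the boundary case suffices), and check that $c > \sqrt{2\ln(1.25/\delta)}$ drives the tail below $\delta$. The main obstacle, and the only genuinely delicate computation, is this last step: matching the constant $1.25$ requires using the hypothesis $\epsilon \in (0,1)$ to control the lower-order $B/2$ correction appearing in both the prefactor and the exponent, exactly as in the Dwork--Roth analysis of the Gaussian mechanism.
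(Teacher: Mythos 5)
Your proposal is correct and follows essentially the same route as the paper's own proof: both reduce $(\epsilon,\delta)$-differential local privacy to a tail bound on the Gaussian privacy loss, rotate coordinates so that the bad event becomes a one-dimensional tail $\mathrm{Pr}\bigl[Z_1 > \frac{\epsilon\sigma^2}{\|\vec{x}'-\vec{x}\|_2} - \frac{\|\vec{x}'-\vec{x}\|_2}{2}\bigr]$ with the sensitivity replaced by the domain diameter $B$, and delegate the final calibration of the constant $\sqrt{2\ln(1.25/\delta)}$ under $\epsilon \in (0,1)$ to the standard Gaussian-mechanism analysis of \citep{dwork2014algorithmic}. Your explicit monotonicity checks (worst case $\|\vec{x}'-\vec{x}\|_2 = B$, and that verifying the boundary case $\sigma = cB/\epsilon$ suffices) are handled only implicitly in the paper, but this is a presentational difference, not a different argument.
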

\begin{proof}
  From the definition of the mechanism, we have for any $\vec{r} \in \RealSet^d$, $\vec{x}, \vec{x}' \in \mathcal{X}$
  \begin{align*}
    \p{M(\vec{x}) = \vec{r}} =& \p{\vec{x} + Z = \vec{r}} \\
     =& \frac{1}{\sqrt{(2\pi\sigma^2)^d}}\exp\left(-\frac{1}{2\sigma^2}\norm{\vec{r} - \vec{x}}_2^2\right) \\
     =& \frac{1}{\sqrt{(2\pi\sigma^2)^d}}\exp\left(-\frac{1}{2\sigma^2}\norm{\vec{r} - \vec{x}' + \vec{x}' - \vec{x}}_2^2\right) \\
     =& \frac{1}{\sqrt{(2\pi\sigma^2)^d}}\exp\left(-\frac{1}{2\sigma^2}\left(\norm{\vec{r} - \vec{x}'}_2^2  + \norm{\vec{x}' - \vec{x}}_2^2 + 2(\vec{r} - \vec{x}')^T(\vec{x}' - \vec{x})\right)\right) \\
     =& \exp\left(-\frac{1}{2\sigma^2}\left(\|\vec{x}' - \vec{x}\|_2^2 + 2(\vec{r} - \vec{x}')^T(\vec{x}' - \vec{x})\right)\right)\p{\vec{x}' + Z = \vec{r}} \\
     =& \exp\left(-\frac{1}{2\sigma^2}\left(2(\vec{r} - \vec{x})^T(\vec{x}' - \vec{x}) - \|\vec{x}' - \vec{x}\|_2^2\right)\right)\p{\vec{x}' + Z = \vec{r}} \\
     =& \exp\left(\frac{1}{\sigma^2}(\vec{x} - \vec{r})^T(\vec{x}' - \vec{x}) + \frac{\|\vec{x}' - \vec{x}\|_2^2}{2\sigma^2}\right)\p{\vec{x}' + Z = \vec{r}}.
  \end{align*}
  $M$ is $\epsilon$-local differentially private as long as $\frac{1}{\sigma^2}(\vec{x} - \vec{r})^T(\vec{x}' - \vec{x}) + \frac{\norm{\vec{x}' - \vec{x}}_2^2}{2\sigma^2} \le \epsilon$. However, the condition does not always hold since $\vec{r}$ is any element in $\RealSet^d$. Therefore, we introduce $\delta$ such that
  \begin{align*}
    \p{\vec{x} + Z = \vec{r} : \vec{r} \in \RealSet^d, \frac{1}{\sigma^2}(\vec{x} - \vec{r})^T(\vec{x}' - \vec{x}) + \frac{\|\vec{x}' - \vec{x}\|_2^2}{2\sigma^2} > \epsilon} \le \delta.
  \end{align*}
  Let $\mathcal{E}_\epsilon = \{ \vec{r} \in \RealSet^d : \frac{1}{\sigma^2}(\vec{x} - \vec{r})^T(\vec{x}' - \vec{x}) + \frac{\|\vec{x}' - \vec{x}\|_2^2}{2\sigma^2} \le \epsilon \}$. Then, we have
  \begin{align*}
    \p{M(\vec{x}) = \vec{r}} =& \p{\vec{x} + Z = \vec{r}} \\
     =& \p{\vec{x} + Z = \vec{r}, \vec{r} \in \mathcal{E}_\epsilon} + \p{\vec{x} + Z = \vec{r}, \vec{r} \notin \mathcal{E}_\epsilon} \\
     \le& e^\epsilon\p{\vec{x}' + Z = \vec{r}} + \delta.
  \end{align*}
  Since $Z$ is a Gaussian with variance $\sigma^2\mat{I}$, $-QZ \sim \mathcal{N}(\vec{0}, \sigma^2\mat{I})$ for arbitrary orthogonal matrix $Q$. Thus, for arbitrary orthogonal matrix $Q$ we have
  \begin{align*}
    & \p{\vec{x} + Z = \vec{r} : \vec{r} \in \RealSet^d, \frac{1}{\sigma^2}(\vec{x} - \vec{r})^T(\vec{x}' - \vec{x}) + \frac{\|\vec{x}' - \vec{x}\|_2^2}{2\sigma^2} > \epsilon} \\
    =& \p{Z = \vec{r} - \vec{x} : \vec{r} \in \RealSet^d, (\vec{x} - \vec{r})^T(\vec{x}' - \vec{x}) > \sigma^2\epsilon - \frac{\|\vec{x}' - \vec{x}\|_2^2}{2}} \\
    =& \p{Z = Q^T(\vec{x} - \vec{r}) : \vec{r} \in \RealSet^d, (\vec{x} - \vec{r})^T(\vec{x}' - \vec{x}) > \sigma^2\epsilon - \frac{\|\vec{x}' - \vec{x}\|_2^2}{2}} \\
    =& \p{Z^TQ^T(\vec{x}' - \vec{x}) > \sigma^2\epsilon - \frac{\|\vec{x}' - \vec{x}\|_2^2}{2}} .
  \end{align*}
  Choosing $Q$ such that its first row vector is along with $\vec{x}' - \vec{x}$ and the others are linearly independent on $\vec{x}'-\vec{x}$ yields
  \begin{align*}
   &\p{Z^TQ^T(\vec{x}' - \vec{x}) > \sigma^2\epsilon - \frac{\|\vec{x}' - \vec{x}\|_2^2}{2}}  \\
   =& \p{Z_1\norm{\vec{x}' - \vec{x}}_2 > \sigma^2\epsilon - \frac{\|\vec{x}' - \vec{x}\|_2^2}{2}} \\
   =& \p{\frac{Z_1}{\sigma} > \frac{\sigma\epsilon}{\|\vec{x}' - \vec{x}\|_2} - \frac{\|\vec{x}' - \vec{x}\|_2}{2\sigma}} ,
  \end{align*}
  where $Z_1 \sim \mathcal{N}(0, \sigma^2)$. From the proof of the Gaussian mechanism~\citep{dwork2014algorithmic}, for $\epsilon \in (0,1)$, if $\sigma \ge \sqrt{2\ln(1.25/\delta)}B/\epsilon$, we have
  \begin{align*}
   \p{\frac{Z_1}{\sigma} > \frac{\sigma\epsilon}{\|\vec{x}' - \vec{x}\|_2} - \frac{\|\vec{x}' - \vec{x}\|_2}{2\sigma}} \le \delta.
  \end{align*}
\end{proof}
The proof of \cref{cor:data-priv} is direct application of \cref{thm:local-gaussian}.

\end{document}